\newcommand\widecheck[1]{%
\savestack{\tmpbox}{\stretchto{%
  \scaleto{%
    \scalerel*[\widthof{\ensuremath{#1}}]{\kern-.6pt\bigwedge\kern-.6pt}%
    {\rule[-\textheight/2]{1ex}{\textheight}}
  }{\textheight}%
}{0.5ex}}%
\stackon[1pt]{#1}{\scalebox{-0.75}{\tmpbox}}%
}
\renewenvironment{proof}[1][\proofname]{%
  \par\pushQED{\qed}\normalfont%
  \topsep6\p@\@plus6\p@\relax
  \trivlist\item[\hskip\labelsep\bfseries#1\@addpunct{:}]%
  \ignorespaces
}{%
  \popQED\endtrivlist\@endpefalse
}
\newcommand{\eqnum}{\leavevmode\hfill\refstepcounter{equation}\textup{\tagform@{\theequation}}}
\newcounter{protocol}
\newtheoremstyle{mytheoremstyle} 
        {\topsep}                    
        {\topsep}                    
        {\slshape\fontfamily{lmr}\selectfont}                   
        {}                           
        {\bfseries\fontfamily{lmr}\selectfont}                   
        {:}                          
        {.5em}                       
        {}  
\theoremstyle{mytheoremstyle}
\newtheorem{thm}{Theorem}
\newtheorem{prop}[thm]{Proposition}
\newtheorem{cor}[thm]{Corollary}
\newtheorem{lem}[thm]{Lemma}
\newtheorem{dff}[thm]{Definition}
\newcommand{\pref}[1]{\prettyref{#1}}
\newcommand{\savehyperref}[2]{\texorpdfstring{\hyperref[#1]{#2}}{#2}}
\def\cleardoublepage{\clearpage\if@twoside \ifodd\c@page\else
\hbox{}
\thispagestyle{empty}
\newpage
\if@twocolumn\hbox{}\newpage\fi\fi\fi}
\DeclareTextCommand{\textprime}{\encodingdefault}{%
  \mbox{$\m@th'\kern-\scriptspace$}%
}
  \def\Ga{\Gamma} \def\Dl{\Delta}
\def\Lm{\Lambda}  \def\Om{\Omega}
  \def\ga{\gamma} \def\dl{\delta}
\def\ep{\epsilon}   
\def\lm{\lambda}
 \def\vth{\vartheta} \def\vphi{\varphi}
\def\Ac{\mathcal{A}}   
   \def\Hc{\mathcal{H}}
  \def\Kc{\mathcal{K}} 
  \def\Oc{\mathcal{O}} 
  \def\Sc{\mathcal{S}} 
   \def\Xc{\mathcal{X}}
\def\E{\mathbb{E}}   
\def\I{\mathbb{I}}   
   \def\P{\mathbb{P}}
 \def\R{\mathbb{R}}
  \def\Cf{\mathfrak{C}} 
 \def\Rb{\bar{R}}
   \def\hb{\bar{h}}
   \def\tb{\bar{t}}
  \def\Oh{\hat{O}} 
\def\Qh{\widehat{Q}} \def\Rh{\hat{R}}  
 \def\Vh{\widehat{V}}  
 \def\Zh{\widehat{Z}}
 \def\zh{\widehat{z}}
\def\pit{\tilde{\pi}}
\newcommand\tsup[2][2]{%
 \def\useanchorwidth{T}%
  \ifnum#1>1%
    \stackon[-.5pt]{\tsup[\numexpr#1-1\relax]{#2}}{\scriptscriptstyle\sim}%
  \else%
    \stackon[.5pt]{#2}{\scriptscriptstyle\sim}%
  \fi%
}
\newlength{\dhatheight}
\newcommand*{\bigdot}[1]{%
  \accentset{\mbox{\LARGE\bfseries .}}{#1}}
\def\vthh{\widehat{\vth}}
\def\phih{\widehat{\phi}}
\def\lmh{\widehat{\lm}}
\def\Lmh{\widehat{\Lm}}
\def\Pit{\tsup[1]{\Pi}}
\def\pit{\tsup[1]{\pi}}
\def\etat{\tsup[1]{\eta}}
\def\Ht{\tsup[1]{H}}
\def\Vt{\tsup[1]{V}}
\def\Qt{\tsup[1]{Q}}
\def\dt{\tsup[1]{d}}
\def\sm{\setminus}
\def\es{\enspace}
\def\ra{\rightarrow}
\def\1{\mathbf{1}}
\def\0{\mathbf{0}}
\def\eqsp{{\hspace{1.35em}}}
\DeclareMathOperator*{\argmax}{argmax} 
\newcommand{\rn}[1]{\textup{\lowercase\expandafter{\romannumeral#1}}}
\def\bms{\begin{bmatrix}}
\def\bme{\end{bmatrix}}
\def\beq{\begin{equation}}
\def\eeq{\end{equation}}
\def\bal{\begin{equation}\begin{aligned}}
\def\eal{\end{aligned}\end{equation}}
\def\bals{\begin{equation*}\begin{aligned}}
\def\eals{\end{aligned}\end{equation*}}
\begin{document}

\title{Common Information based Approximate State Representations in Multi-Agent Reinforcement Learning}
\author{
Hsu Kao\\
University of Michigan\\
\texttt{hsukao@umich.edu}\\
\And
Vijay Subramanian\\
University of Michigan\\
\texttt{vgsubram@umich.edu}\\
}
\maketitle

\begin{abstract}
Due to information asymmetry, finding optimal policies for Decentralized Partially Observable Markov Decision Processes (Dec-POMDPs) is hard with the complexity growing doubly exponentially in the horizon length. The challenge increases greatly in the multi-agent reinforcement learning (MARL) setting where the transition probabilities, observation kernel, and reward function are unknown. Here, we develop a general  compression framework with approximate common and private state representations, based on which decentralized policies can be constructed.  We derive the optimality gap of executing dynamic programming (DP) with the approximate states in terms of the approximation error parameters and the remaining time steps. When the compression is exact (no error), the resulting DP is equivalent to the one in existing work. Our general framework generalizes a number of methods proposed in the literature. The results shed light on designing practically useful deep-MARL network structures under the ``centralized learning distributed execution" scheme.
\end{abstract}

\section{\uppercase{Introduction}}
Finding optimal policies for Decentralized Partially Observable Markov Decision Processes (Dec-POMDPs) is hard due to \emph{information asymmetry}, which refers to the mismatch in the set of information each agent has in a multi-agent environment. In fact, a finite-horizon Dec-POMDP with more than one agent is NEXP-complete \citep{Bernstein_DecPOMDP_2002}, implying a doubly exponential complexity growth in the horizon length. In decentralized control theory, theoretical solutions have been proposed to find the optimal control laws for Dec-POMDPs. Notably among them is the common information (CI) approach \citep{Demos_CI_2013}, a framework that decomposes the decision of a full policy into the decision of a ``prescription policy" from the CI known by all the agents, and the ``prescription" itself which is a full characterization of how the agents should act based on any realization of their own private information (PI). This approach effectively transforms the decentralized model back to a centralized one from the view of a fictitious ``coordinator" who only observes the CI, and permits a coordinator level sequential decomposition using a belief state a'la POMDPs \citep{Kumar_StochCtrl_2015}.

The challenge increases greatly in the multi-agent reinforcement learning (MARL) setting where the \emph{model} -- transition probabilities, observation kernel, and reward function -- is unknown. When the agents learn concurrently, information asymmetry causes another issue called the ``non-stationarity issue," since the effective environment observed by each agent is time-varying as the other agents learn and update their policies. The issue can be alleviated in principle by the ``centralized learning and distributed execution" scheme \citep{Dibangoye_CentralLearn_2018} as the learning is from the coordinator's viewpoint; indeed, if agents only update their policies using CI, they can perfectly track others' policies. However, there is still a big gap in applying the CI approach to the MARL setting. First, the Bayesian updates of the belief state in the CI approach require the knowledge of the model, which are not available in the MARL setting. Moreover, the linear growth of length of private histories leads to the doubly exponential growth of the space of prescriptions in time, which is explosively large even for toy-size environments and forbids any practical explorations in such space. One natural question is whether we can restrict attention to some policies (and prescriptions) that take some state variable as inputs without losing much performance, where the state variables encapsulate the crucial information relevant to future decisions in a time-invariant domain, and where the representations (ways of encapsulation) can be learned without the knowledge of the model.

In this paper, we formulate good approximate common and private state representations for learning close-to-optimal policies in unknown finite-horizon Dec-POMDPs, where each agent receives its own private information plus a \emph{common} observation. The agents also share the same commonly observed rewards; however, they may not know each others' actions. We propose conditions in \pref{dff:ASPS} for an approximate sufficient private state (ASPS), which compresses an agent's private information, i.e., its action observation history (AOH), and conditions in \pref{dff:ASCS} for an approximate sufficient common state (ASCS), which compresses the fictitious coordinator's AOH, with the actions being ASPS-based prescriptions and the observations being common observations. Critically, using \pref{thm:ASPS} and \pref{thm:ASCS}, in \pref{thm:ASCS+ASPS} we derive the optimality gap in terms of the error parameters of our compression and the remaining time steps, between the values of two dynamic programmings (DPs): one in \pref{alg:FCS-FPS} for the optimal policy using the CI approach without compression, with states being the complete coordinator's AOHs and actions being the prescriptions from \cite{Demos_CI_2013}; and the other in \pref{alg:ASCS-ASPS} using our framework with states being any \emph{valid}\footnote{Satisfying the approximation criteria.} ASCSs and the actions being ASPS-based prescriptions for valid ASPS. Our framework generalizes a number of results in the literature: first, it extends the approximate information state (AIS) framework \citep{Aditya_AIS_2019,Aditya_AIS_2020} to the multi-agent setting; second, it extends the CI approach \citep{Demos_CI_2013} and the follow-up sufficient private information (SPI) framework that compresses the private states \citep{Hamid_SPI_2021}, to their general approximate state representation counterpart; third, it generalizes the work by \cite{Weichao_ASPI_2020} to include \emph{non-injective} compressions and a general approximate common state representation. Our results can provide guidance on designing Deep Learning (DL) structures to learn the (compressed) state representations and the optimal policies (using learned representations) under the centralized learning distributed execution scheme, which applies to practical offline or online MARL settings.

\vspace{-10pt}
\paragraph{Related Work.}
The problem of state representation is well studied in the single-agent POMDP case. Stochastic control theory details the conditions an \emph{information state} (IS) needs to satisfy so that it acts as the Markov state in an equivalent MDP so one may only consider IS-based policies without loss of generality \citep{Aditya_DecCtrl_2016}; the belief state is an example of such IS \citep{Kumar_StochCtrl_2015}. \cite{Aditya_AIS_2020} extends the idea to an \emph{approximate information state} (AIS), where the IS conditions  hold approximately; importantly, the optimality gap of running DP with any valid AIS is quantified. Based on their AIS scheme, they propose a DL framework that learns the AIS representation without knowing the model. Recent work on \emph{Deep Bisimulation for Control} (DBC) \citep{Amy_AIS_2021} in the DL literature uses similar ideas: they train an encoder to predict well the instantaneous rewards and transitions, and use the encoder output to train the policies. The encoder is an encapsulation or a compression. The optimality gap established is similar to the result of the infinite horizon case in \cite{Aditya_AIS_2019}. There are more representation learning schemes not requiring model knowledge in the DL or RL literature, e.g. \cite{Ha_AIS_2018}, with the bulk without theoretical guidance or guarantees.

In the multi-agent context, \cite{Demos_CI_2013} propose a belief IS for the coordinator using the CI approach, without compressing agents' private information. \cite{Hamid_SPI_2018} further compress private histories to \emph{sufficient private information} (SPI) so that the corresponding spaces of the belief IS and prescriptions are time-invariant. They identify conditions such that restricting attention to SPI-based policies is without loss of optimality. However, not only do they consider a control setting where the model is required, but also only present compression of the common history to a belief state, which is a narrow class of compression schemes. Nevertheless, this work will be a starting point of our work.  \cite{Weichao_ASPI_2020} consider an information state embedding that injectively maps agents' histories to representations in a fixed domain, and quantify the effect of the embedding on the value function like \cite{Aditya_AIS_2019}. However, their requirement that the mapping is injective is impractical for two reasons: one, an injective mapping does not reduce the policy complexity; and two, real world applications often demand non-injective encapsulations - e.g., tiger~\citep{Kaelbling_POMDP_1998} where one IS is the number of right observations minus the number of left observations, which is non-injective. Moreover, they also compress the common state to a belief state, but it is unclear how this can be done in practice without model information.

Another line of work in deep-MARL literature also applies the notion of CI (also known as the common knowledge) to solving MARL problems \citep{Schroeder_ComKnow_2019,Foerster_Hanabi1_2019,Foerster_Hanabi2_2019,Sokota_CAPI_2021}. They search for optimal policies for a Dec-POMDP when the model is known, while we consider designing sample efficient and lower regret learning algorithms in an offline or online MARL setting for an unknown model. Moreover, many of them involve heuristic or approximation methods without knowing the potential loss from the approximations or apply a variety of machine learning schemes without a theoretical basis or understanding.

\section{\uppercase{Preliminaries}}
\paragraph{Notation.}
Let $\Dl(\Xc)$ denote the set of distributions on the space $\Xc$, and $\Om(X)$ denote the space where the variable $X$ takes values. Superscripts are used as the agent index and subscripts as the time index. The notation $X_{a:b}^{c:d}$ denotes the tuple $(X_a^c,\dots,X_a^d,\dots,X_b^c,\dots,X_b^d)$. In some cases superscripts or subscripts are omitted, and if so the meaning will be clarified. Capital letters are used for random variables while lower case letters are for their realizations. For random variables $X$ with a realization $x$, we use the short hand notation $\P(\cdot|x)\triangleq\P(\cdot|X=x)$ and $\E[\cdot|x]\triangleq\E[\cdot|X=x]$. If a random variable appears without realization in a place other than the operand of $\E$, then it means the related equation should hold for any Borel measurable subset in its domain.

\subsection{Dec-POMDP Model}

Suppose there are $N$ agents in the system. We consider the Dec-POMDP model, i.e., a tuple $(\Sc,\Ac,\P_T,\mathcal{R},\Oc,\P_O,T,\P_I)$ where the quantities are: $\Sc$ is the state space; $\Ac=\Ac^1\times\cdots\times\Ac^N$ is the joint action space whose elements are joint actions $A=(A^1,\dots,A^N)$; $\P_T:\Sc\times\Ac\ra\Dl(\Sc)$ is the transition kernel mapping a current state and a joint action to a distribution of new states $\P_T:\Sc\times\Ac\ra\Dl(\Sc)$; $\mathcal{R}:\Sc\times\Ac\ra\Dl(\R)$ is the reward function mapping a current state and a joint action to a probability distribution on the reals; $\Oc=\Oc^0\times\Oc^1\times\cdots\times\Oc^N$ is the joint observation space whose elements are joint observations $O=(O^0,O^1,\dots,O^N)$, where $O^0$ is commonly observed but $O^n$ is only observed by agent $n$; $\P_T:\Sc\ra\Dl(\Oc)$ is the observation kernel mapping a current state to a distribution of joint observations; $T$ is the time horizon; $\P_I\in\Dl(\Sc)$ is the initial state distribution. In comparison to the standard Dec-POMDP model \citep{DecPOMDPBook_2016}, we have an additional common observation (including the reward), and our observations depend only on the current state.

We assume $\Sc$, $\Ac$, $\Oc$, and $T$ are finite and known in advance, while $\P_T$, $R$, $\P_O$, and $\P_I$ are unknown in the MARL setting. Further, agents have perfect recall. At time $t$, agent $n$ observes $(O_t^0,O_t^n)$ generated from $\P_O(S_t)$, then uses the policy $A_t^n=g_t^n(O_{1:t}^0,g_{1:t-1},H_t^n)$ to select its action, where $g_s=g_s^{1:N}$ and $H_t^n=(A_{1:t-1}^n,O_{1:t}^n)$ is agent $n$'s private history and known as its AOH. The agents receive a reward $R_t\triangleq R(S_t,A_t)$ sampled from $\mathcal{R}(S_t,A_t)$, and the next state $S_{t+1}$ is generated from $\P_T(S_t,A_t)$. The goal is to find a policy $g=g_{1:T}$ to maximize the common cumulative reward 
\beq
\E\left[\sum_{t=1}^TR(S_t,A_t)\bigg|g\right],
\eeq
where the expectation is taken over the measure generated by policy $g$ applied to model $(\P_T,\mathcal{R},\P_O,\P_I)$.

\subsection{AIS Framework}\label{sec:AIS}
In the single-agent POMDP setting, the spaces $\Ac$ and $\Oc$ are not product spaces, and at time $t$ the agent's policy is of the form $g_t:\Om(H_t)\ra\Om(A_t)$, where $H_t=(A_{1:t-1},O_{1:t})$ is the agent's AOH. Note the policy space grows exponentially in $t$ as the length of $H_t$ grows linearly in $t$. \cite{Aditya_AIS_2019} give conditions of a representation encapsulating the information in $H_t$ that is approximately sufficient for decision purposes into a time-invariant space.

\begin{dff}\label{dff:AIS}
An $(\ep,\dl)$-approximate information state $\Zh_t$ is the output of a function $\Zh_t=\vthh_t(H_t)$ that satisfies the following properties:
\begin{enumerate}[label=\textbf{(AIS\arabic*)},leftmargin=*,topsep=-5pt,itemsep=0pt]
\setlength{\parskip}{0pt}
\item It evolves recursively $\Zh_{t+1}=\phih_t(\Zh_t,A_t,O_{t+1})$.
\item It suffices for approximate performance evaluation $|\E[R_t|h_t,a_t]-\E[R_t|\zh_t,a_t]|\leq\ep$ $\forall\es h_t,a_t$.
\item It suffices for approximately predicting the observation, i.e., $\forall\es h_t,a_t$, we have $\Kc(\P(O_{t+1}|h_t,a_t),\P(O_{t+1}|\zh_t,a_t))\leq\dl$ , where $\Kc(\cdot,\cdot)$ is a distance between two distributions\footnote{For example, Wasserstein and total variation distances.}.
\end{enumerate}
\end{dff}

The value function at $t$ obtained from Bellman equations with $\Zh$'s as states falls behind the optimal value function at the most by an expression linear in $T-t$, $\ep$, and $\dl$ \citep{Aditya_AIS_2019}. When $\ep=\dl=0$, the expression is $0$, and the AIS $\Zh$ degenerates to an IS $Z$.

In \cite{Aditya_AIS_2019}, a DL framework is provided to find an ``approximate mapping" $\vthh_t(\cdot)$ for any given POMDP model. The idea is to interpret the quantities in the LHS of (AIS2) and (AIS3) as driving the learning loss in DL, and let existing DL optimization algorithms find good mappings. The resulting AIS can then be used as the state in common policy approximation methods to find a near-optimal policy.

\subsection{Common Information based DPs}

\subsubsection{DP with No Compression}
In a DecPOMDP, the action decision for agent $n$ at time $t$,  $A_t^n=g_t^n(O_{1:t}^0,g_{1:t-1},H_t^n)$, can be split into two steps. In the first step, based on past common observations and policies $(O_{1:t}^0,g_{1:t-1})$ (using perfect recall), the agent decides $g_t^n$ and hence $\Ga_t^n(\cdot)\triangleq g_t^n(O_{1:t}^0,g_{1:t-1},\cdot)$; then in the second step, it simply applies $\Ga_t^n$ to $H_t^n$ to obtain the action $A_t^n=\Ga_t^n(H_t^n)$. The function $\Ga_t^n$ is called the prescription (function), since it \emph{prescribes} what the agent should do based on any possible realization of its private information.

This decomposition technique is called the CI approach \citep{Demos_CI_2013}. Note that the actual decision is carried out in the first step and solely upon CI (perfect recall makes policy common knowledge). One may then imagine there is a fictitious coordinator, labelled agent $0$. At time $t$, the coordinator's policy is of the form $d_t:\Om(H_t^0)\ra\Om(\Ga_t)$, where $H_t^0\triangleq(O_{1:t}^0,\Ga_{1:t-1})$ is equivalent to $(O_{1:t}^0,g_{1:t-1})$ and $\Ga_t=\Ga_t^{1:N}$; then it sends $\Ga_t$ to every agent, and agent $n$ selects $A_t^n=\Ga_t^n(H_t^n)$. It is shown that this decomposition is without loss of generality (so without loss of optimality too). The coordinator observes common observation $O_t^0$ and chooses action $\Ga_t$; hence, $H_t^0$ can be seen as the coordinator's AOH and will be called the full common state (FCS), while $H_t^n$ will be referred to as the full private state (FPS) of agent $n$. From the perspective of the coordinator, the problem is now a centralized POMDP, and the goal is to find a policy $d=d_{1:T}$ that maximizes the expected cumulative reward. This permits a sequential decomposition with FCS as the state and an FPS-based prescription (meaning the prescription takes FPS as its input) as the action, which is presented in \pref{alg:FCS-FPS}.

\begin{algorithm}
\caption{Dynamic Programming with FCSs and FPS-based Prescriptions}\label{alg:FCS-FPS}
$V_{T+1}(h_{T+1}^0)\triangleq0$\\
\For{$t=T,\dots,1$}{
$Q_t(h_t^0,\ga_t)=\E\big[R(S_t,\Ga_t(H_t^{1:N}))+$\\
\hspace*{20pt}$V_{t+1}((H_t^0,\Ga_t,O_{t+1}^0))|H_t^0=h_t^0,\Ga_t=\ga_t\big]$\\
$V_t(h_t^0)=\max_{\ga_t\in\Om(\Ga_t)}Q_t(h_t^0,\ga_t)$
}
\end{algorithm}

In practice, the coordinator is virtual and the computation of the coordinator is carried out in all agents -- this is viable since the coordinator's computation only requires CI, which every agent has access to. Note the update of the state is done by direct concatenation of the incoming $\Ga_t$ and $O_{t+1}^0$.

\subsubsection{DP with BCS}
\cite{Demos_CI_2013} further compresses the FCS to the belief common state (BCS) $\Pi_t=\P(S_t,H_t^{1:N}|H_t^0)$, which is the conditional distribution on the state and the FPSs given the FCS. It is shown that restricting attention to coordinator's policy of the form $\bigdot{d}_t:\Om(\Pi_t)\ra\Om(\Ga_t)$ is without loss of optimality. The DP presented thus uses this BCS as the state and an FPS-based prescription as the action -- see \pref{app:BCS}.

There are two problems with this approach when applied to the MARL setting. First, the BCS is updated via a Bayesian update using $\P_T$ and $\P_O$, which requires model knowledge. Second, the growing length of $H_t^{1:N}$ makes the spaces of $\Pi_t$ and $\Ga_t$ explosively large and impossible to explore. However, at a conceptual level we can apply the AIS framework to the centralized POMDP of the coordinator\footnote{Strictly speaking, this requires a straightforward extension to time-varying action spaces for different time steps -- see \cite{Aditya_AIS_2020} Section 5 for details.}; the underlying decentralized information structure coupled with increasing domain of private information makes practical implementations of this scheme challenging. 

\subsubsection{DP with BCS and SPI}
To alleviate the aforementioned dimensionality issue, \cite{Hamid_SPI_2018} further compresses the FPS to a representation called the sufficient private information (SPI) lying in a time-invariant domain. They identify a set of conditions for the compression so that the SPI is sufficient for decision making purposes.

\begin{dff}\label{dff:SPI}
A sufficient private information (SPI) $Z_t^{1:N}$ is a tuple of outputs of a set of functions $Z_t^n=\vth_t^n(H_t^0,H_t^n)$ $\forall\es n\in[N]$ satisfying the properties:
\begin{enumerate}[label=\textbf{(SPI\arabic*)},leftmargin=*,topsep=-5pt,itemsep=0pt]
\setlength{\parskip}{0pt}
\item It evolves recursively, i.e., $\forall\es n\in[N]$, $Z_{t+1}^n=\phi_t^n(Z_{t-1}^n,\Ga_{t-1},O_t^0,A_{t-1}^n,O_t^n,g_{1:t-1})$.
\item It suffices for performance evaluation \scalebox{1}{$\E[R(S_t,A_t)|h_t^0,h_t^n,a_t]=\E[R(S_t,A_t)|h_t^0,z_t^n,a_t]$} $\forall\es h_t^0,h_t^n,a_t$.
\item It suffices for predicting itself and the common observation $\P(Z_{t+1}^{1:N},O_{t+1}^0|h_t^0,h_t^{1:N},\ga_t,a_t)$ $=\P(Z_{t+1}^{1:N},O_{t+1}^0|h_t^0,z_t^{1:N},\ga_t,a_t)$ $\forall\es h_t^{0:N},\ga_t,a_t$.
\item It suffices for predicting other agents' SPI $\P(Z_t^{-n}|h_t^0,h_t^n)=\P(Z_t^{-n}|h_t^0,z_t^n)$ $\forall\es h_t^0,h_t^n$.
\end{enumerate}
\end{dff}

The coordinator now considers SPI-based prescriptions $\Lm_t=\Lm_t^{1:N}$ where $A_t^n=\Lm_t^n(Z_t^n)$, and the BCS is changed to $\Pit_t=\P(S_t,Z_t^{1:N}|\Ht_t^0)$ where $\Ht_t^0=(O_1^0,\Lm_1,\dots,O_t^0)$. It is shown that restricting attention to coordinator's policy of the form $\dt_t:\Om(\Pit_t)\ra\Om(\Lm_t)$ is without loss of optimality. The resulting DP uses the BCS as the state and SPI-based prescription as the action -- see \pref{app:BCS+SPI}. Note that the compression actually leads to an action compression for the coordinator -- from FPS-based prescriptions to SPI-based prescriptions -- which has no loss in performance.

With $\Pit_t$, $Z_t^{1:N}$, and $\Lm_t$ all lying in time-invariant spaces, the complexity no longer grows with time. However, it is unclear how to find mappings satisfying \pref{dff:SPI} and update the BCSs in an MARL setting. Further, the solution focuses on a decentralized setting wherein the (lossless) compression functions are consistent (common knowledge), and the performance assessments and predictions are based only on the information of any particular agent. Ensuring these properties in the RL context would require significant communication, particularly during training.

\section{\uppercase{Approximate State Representations}}\label{sec:main-result}

We seek to extend the idea of identifying representations sufficient for approximately optimal decision making from \pref{sec:AIS} to the multi-agent setting, and develop a general compression framework for common states and private states (hence also prescriptions) whose mappings can be learned from samples obtained by interacting with the environment alone.

In this section, we propose our general states representation framework for approximate planning and control in partially observable MARL problems. We start by compressing private histories to ASPS; for the coordinator, this induces an action compression from FPS-based prescriptions to ASPS-based prescriptions. Then based on this compression, the common history is further compressed to ASCS.

The framework we develop will be consistent with the philosophy of recent empirical MARL work wherein there is a centralized agent called the supervisor. The supervisor observes all the quantities and develops good compression of private information and common information that the coordinator can use to produce close-to-optimal prescriptions (using the compressed common information), which can be implemented by the agents using just their own compressed private information. We detail the supervisor in \pref{sec:supervisor} but point out here that it has the knowledge of $H_t^{0:N}$ for all $t\in[T]$. Note that this viewpoint is consistent with the ``centralized training with distributed execution" setting of the empirical MARL work.

\subsection{Compressing Private States}

\begin{dff}\label{dff:ASPS}
An $(\ep_p,\dl_p)$-approximate sufficient private state (ASPS) $\Zh_t^{1:N}$ is a tuple of outputs of a set of functions $\Zh_t^n=\vthh_t^n(H_t^0,H_t^n)$ $\forall\es n\in[N]$ satisfying:
\begin{enumerate}[label=\textbf{(ASPS\arabic*)},leftmargin=*,topsep=-5pt,itemsep=0pt]
\setlength{\parskip}{0pt}
\item It evolves in a recursive manner, that is, $\forall\es n\in[N]$, $\Zh_t^n=\phih_t^n(\Zh_{t-1}^n,\Ga_{t-1},O_t^0,O_t^n)$.
\item It suffices for approximate performance evaluation $\big|\E[R(S_t,A_t)|h_t^0,h_t^{1:N},a_t]-\E[R(S_t,A_t)|h_t^0,\zh_t^{1:N},a_t]\big|\leq\ep_p/4$ $\forall\es h_t^{0:N},a_t$.
\item It suffices for approximately predicting observations $\Kc\big(\P(O_{t+1}^{0:N}|h_t^0,h_t^{1:N},a_t),$ $\P(O_{t+1}^{0:N}|h_t^0,\zh_t^{1:N},a_t)\big)\leq\dl_p/8$ $\forall\es h_t^{0:N},a_t$.
\end{enumerate}
\end{dff}

This definition induces the ASPS-based prescription, which is a mapping $\lmh_t:\Om(\Zh_t^{1:N})\ra\Om(A_t)$ that prescribes the action tuple for all ASPSs $A_t=\lmh_t(\Zh_t^{1:N})=(\lmh_t^1(\Zh_t^1),\dots,\lmh_t^N(\Zh_t^N))$ in a component-wise manner. One can run a DP with FCSs as states and ASPS-based prescriptions as actions -- see \pref{alg:FCS-ASPS}.

\begin{algorithm}
\caption{Dynamic Programming with FCSs and ASPS-based Prescriptions}\label{alg:FCS-ASPS}
$\Vh_{T+1}(h_{T+1}^0)\triangleq0$\\
\For{$t=T,\dots,1$}{
$\Qh_t(h_t^0,\lmh_t)=\E\big[R_t(S_t,\lmh_t(\Zh_t^{1:N}))$\\
\hspace*{20pt}$+\Vh_{t+1}((H_t^0,\Lmh_t,O_{t+1}^0))|H_t^0=h_t^0,\Lmh_t=\lmh_t\big]$\\
$\Vh_t(h_t^0)=\max_{\lmh_t\in\Om(\Lmh_t)}\Qh_t(h_t^0,\lmh_t)$
}
\end{algorithm}

The compression is characterized by functions $\vthh_t^{1:N}$. These functions also relate $\Om(\Ga_t)$ and $\Om(\Lmh_t)$ as ASPS-based prescriptions are a strict subset of FPS-based prescritions; this detail will be explained in \pref{sec:ASPS-proof}. For now we note that here the conditions we set for the action compression from $\Om(\Ga_t)$ to $\Om(\Lmh_t)$ are on the private states instead of defining an encapsulation directly on the actions (i.e., prescriptions); moreover, the compression may depend on the common state $h_t^0$ as well. Hence, this falls outside of the action compression scheme studied in \cite{Aditya_AIS_2020}. We bound the error between the value functions obtained from \pref{alg:FCS-ASPS} and the optimal value functions obtained from \pref{alg:FCS-FPS} in the following theorem proved in \pref{sec:ASPS-proof}.

\begin{thm}\label{thm:ASPS}
Assume the reward function $R$ is uniformly bounded by $\Rb$. For any $h_t^0\in\Om(H_t^0)$ and $\ga^*\in$ $\argmax_\ga Q_t(h_t^0,\ga)$, there exists a $\lmh\in\Om(\Lmh_t)$ such that
\beq
Q_t(h_t^0,\ga^*)-\Qh_t(h_t^0,\lmh)\leq\frac{\tb(\tb+1)}{2}(\ep_p+T\Rb\dl_p)+(\tb+1)\ep_p,
\eeq
\beq
V_t(h_t^0)-\Vh_t(h_t^0)\leq\frac{\tb(\tb+1)}{2}(\ep_p+T\Rb\dl_p)+(\tb+1)\ep_p,
\eeq
where $\tb=T-t$.
\end{thm}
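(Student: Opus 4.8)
The plan is to establish the bound on the $Q$-functions by backward induction on $t$ and read off the bound on the $V$-functions as a corollary: since $\ga^*$ is optimal we have $V_t(h_t^0)=Q_t(h_t^0,\ga^*)$, while $\Vh_t(h_t^0)=\max_{\lmh_t}\Qh_t(h_t^0,\lmh_t)\ge\Qh_t(h_t^0,\lmh)$ for whatever $\lmh$ the $Q$-argument produces, so the same right-hand side dominates $V_t-\Vh_t$. The base case $t=T+1$ holds because both value functions are identically zero. Writing $\tb=T-t$ and $B(\tb)$ for the claimed bound, differencing the closed form gives $B(\tb)-B(\tb-1)=\tb(\ep_p+T\Rb\dl_p)+\ep_p$, so the inductive step must show that one backward sweep inflates the error by exactly this increment; the quadratic term $\tb(\tb+1)/2$ is then the telescoped $\sum_{k\le\tb}k$, and the per-step increment growing linearly in $\tb$ rather than staying constant is the whole reason the bound is quadratic.

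To run the step I route the comparison through an intermediate prescription. Given $\ga^*$ I construct an ASPS-based prescription $\lmh$ by a measurable selection: for each realizable value of $\Zh_t^{1:N}$ I fix one private history in that ASPS class and let $\lmh$ output the action $\ga^*$ assigns there, so $\lmh$ is constant on ASPS classes and agrees with $\ga^*$ on the representatives. As the excerpt notes, $\lmh$ embeds into \pref{alg:FCS-FPS} as the FPS-based prescription $\Ga^{\lmh,n}(h_t^n)=\lmh^n(\vthh_t^n(h_t^0,h_t^n))$, so appending it to the common state is legitimate. I then split
\[
Q_t(h_t^0,\ga^*)-\Qh_t(h_t^0,\lmh)=\underbrace{\big(Q_t(h_t^0,\ga^*)-Q_t(h_t^0,\Ga^{\lmh})\big)}_{\text{compression loss}}+\underbrace{\big(Q_t(h_t^0,\Ga^{\lmh})-\Qh_t(h_t^0,\lmh)\big)}_{\text{value-iterate loss}}.
\]
The value-iterate loss is the benign term: $\Ga^{\lmh}$ and $\lmh$ realize the same action map, so the immediate rewards in \pref{alg:FCS-FPS} and \pref{alg:FCS-ASPS} coincide and the two updates share the same common-state transition; the term therefore collapses to $\E[(V_{t+1}-\Vh_{t+1})((h_t^0,\Ga^{\lmh},O_{t+1}^0))\mid h_t^0,\Ga^{\lmh}]$, which is at most $B(\tb-1)$ by the induction hypothesis applied uniformly over the next common state.

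All of \pref{dff:ASPS} is spent on the compression loss, which is nonnegative by optimality of $\ga^*$. Its immediate-reward part is handled by (ASPS2): replacing $\ga^*$ with its ASPS-measurable coarsening $\Ga^{\lmh}$ changes the conditional reward only through histories whose representatives share their ASPS, so a triangle inequality across applications of (ASPS2), which is what the constant $4$ in the definition is calibrated for, bounds it by $\ep_p$. The continuation part compares the true value $V_{t+1}$ evaluated after $\ga^*$ against $V_{t+1}$ evaluated after $\Ga^{\lmh}$: one piece is the change of the observation law induced by the swapped actions, absorbed by (ASPS3) scaled by $\|V_{t+1}\|_\infty\le T\Rb$ (here the constant $8$ is consumed and the $T\Rb\dl_p$ term appears), and the other piece is the change caused by the different prescription now recorded inside the common state.

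I expect that last piece to be the main obstacle. Swapping the appended prescription alters the coordinator's state, and $V_{t+1}$ sees that state only through the induced belief on $(S_{t+1},H_{t+1}^{1:N})$, so the perturbation is not one-shot but propagates through all $\tb$ remaining stages. Controlling it needs an auxiliary lemma stating that a single ASPS-consistent prescription swap changes the optimal continuation value by at most $\tb(\ep_p+T\Rb\dl_p)$; this lemma is itself proved by induction over the remaining horizon, invoking (ASPS2)--(ASPS3) at each future stage together with the recursive consistency (ASPS1), which guarantees the coarsened prescription keeps generating valid ASPS transitions so the estimate can be reapplied. Combining the reward bound $\ep_p$, the single-step observation term, and this horizon-length estimate yields exactly the increment $\tb(\ep_p+T\Rb\dl_p)+\ep_p$, closing the induction and, via the first paragraph, delivering both displayed inequalities.
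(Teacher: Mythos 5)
Your outer skeleton matches the paper's: backward induction on $t$, the split of $Q_t(h_t^0,\ga^*)-\Qh_t(h_t^0,\lmh)$ into a compression loss $Q_t(h_t^0,\ga^*)-Q_t(h_t^0,\ga_{\lmh,h_t^0})$ plus a value-iterate loss absorbed by the induction hypothesis, the representative-based construction of $\lmh$, and reading the $V$ bound off the $Q$ bound. The gap is in the one-step compression loss, which is where all the work lives. The paper bounds it by passing to supervisor-level functions $Q_t^S(h_t^0,h,\ga)$ that condition on the full private history (without which (ASPS2)--(ASPS3) cannot even be invoked, since they are stated given $h_t^{1:N}$), and then running a three-term triangle inequality through the representative $\hb$ of each ASPS class: (i) $|Q_t^S(h,\ga^*)-Q_t^S(\hb,\ga^*)|$, (ii) $|Q_t^S(\hb,\ga^*)-Q_t^S(\hb,\ga_{\lmh,h_t^0})|$, (iii) $|Q_t^S(\hb,\ga_{\lmh,h_t^0})-Q_t^S(h,\ga_{\lmh,h_t^0})|$. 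The piece you call ``the main obstacle'' --- the swapped prescription being recorded inside the coordinator's state and propagating forward --- is term (ii), and the paper shows it is \emph{exactly zero} (\pref{lem:two-prescription}): one constructs a mimicking policy $d'$ that, on common histories containing the new prescription, plays what $d^*$ would have played had the old one been recorded, shows by induction that the induced future statistics are then identical, and uses optimality of $d^*$ in both directions to upgrade the resulting inequality to an equality. Your proposed auxiliary lemma charging $\tb(\ep_p+T\Rb\dl_p)$ to this term cannot be afforded: the genuine source of the linear-in-$\tb$ factor is terms (i) and (iii), each costing $\tb(\ep_p+T\Rb\dl_p)/2+\ep_p/2$ via an inner horizon induction (\pref{lem:two-ASPS}, where (ASPS1) guarantees the next-step private histories again share an ASPS so the estimate recurses), and these two already exhaust the $\tb(\ep_p+T\Rb\dl_p)+\ep_p$ budget. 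A nonzero charge for (ii) on top would roughly double the per-step increment and break the telescoping you set up in your first paragraph.

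A second, smaller omission: term (i) compares $h$ and $\hb$ under the \emph{optimal} prescription $\ga^*$, which may assign them different actions, so the same-action comparison lemma does not apply directly. The paper's \pref{cor:ASPS} handles this by perturbing $\ga^*$ at exactly one of the two histories to force agreement and sandwiching the resulting $Q$-values using optimality of $\ga^*$. Your proposal never addresses this case. Neither issue changes the architecture you chose, but both must be filled in before the increment $\tb(\ep_p+T\Rb\dl_p)+\ep_p$ is actually established.
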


\subsection{Compressing Common States}\label{sec:ASCS}
While restricting attention to ASPS-based prescriptions, we further compress the common history to an approximate representation by applying the state compression result of \cite{Aditya_AIS_2019}.

\begin{dff}\label{dff:ASCS}
An $(\ep_c,\dl_c)$-approximate sufficient common state (ASCS) $\Zh_t^0$ is the output of a function $\Zh_t^0=\vthh_t^0(H_t^0)$ satisfying the properties:
\begin{enumerate}[label=\textbf{(ASCS\arabic*)},leftmargin=*,topsep=-5pt,itemsep=0pt]
\setlength{\parskip}{0pt}
\item It evolves in a recursive manner, that is, $\Zh_t^0=\phih_t^0(\Zh_{t-1}^0,\Lmh_{t-1},O_t^0)$.
\item It suffices for approximate performance evaluation, i.e., $\forall\es h_t^0,\lmh_t$, we have $\big|\E[R_t(S_t,A_t)|h_t^0,\lmh_t]-\E[R_t(S_t,A_t)|\zh_t^0,\lmh_t]\big|\leq\ep_c$.
\item It suffices for approximately predicting common observation, i.e., $\forall\es h_t^0,\lmh_t$, we have $\Kc\big(\P(O_{t+1}^0|h_t^0,\lmh_t),\P(O_{t+1}^0|\zh_t^0,\lmh_t)\big)\leq\dl_c/2$.
\end{enumerate}
\end{dff}

In our proposed representation framework, agents compress the CI and PI to ASCS $\Zh_t^0$ and ASPS $\Zh_t^{1:N}$, which can be updated recursively using the incoming CI and PI. Agents use the same policy $\widecheck{d}_t^*:\Om(\Zh_t^0)\ra\Om(\Lmh_t)$ to decide the ASPS-based prescription $\Lmh_t$ from $\Zh_t^0$, then they apply $\Lmh_t$ to their own ASPS $\Zh_t^n$ to obtain the action $A_t^n$. Approximately optimal policies then result from the DP with ASCSs as states and ASPS-based prescriptions as actions -- see \pref{alg:ASCS-ASPS}.

\begin{algorithm}
\caption{Dynamic Programming with ASCSs and ASPS-based Prescriptions}\label{alg:ASCS-ASPS}
$\widecheck{V}_{T+1}(\zh_{T+1}^0)\triangleq0$\\
\For{$t=T,\dots,1$}{
$\widecheck{Q}_t(\zh_t^0,\lmh_t)=\E\big[R(S_t,\Lmh_t(\Zh_t^{1:N}))$\\
\hspace*{20pt}$+\widecheck{V}_{t+1}(\phih_t^0(\Zh_t^0,\Lmh_t,O_{t+1}^0))|\Zh_t^0=\zh_t^0,\Lmh_t=\lmh_t\big]$\\
$\widecheck{V}_t(\zh_t^0)=\max_{\lmh_t\in\Om(\Lmh_t)}\widecheck{Q}_t(\zh_t^0,\lmh_t)$
}
\end{algorithm}

From \pref{alg:FCS-ASPS} to \pref{alg:ASCS-ASPS}, only the states are further compressed, so a gap result bounding the difference between the two DPs holds, similar to the result in \cite{Aditya_AIS_2019}. See \pref{app:ASCS} for details.

\begin{thm}\label{thm:ASCS}
Assume the reward function $R$ is uniformly bounded by $\Rb$. For any $h_t^0\in\Om(H_t^0)$ and $\lmh\in\Om(\Lmh_t)$, with $\tb=T-t$, we have
\beq
\Qh_t(h_t^0,\lmh)-\widecheck{Q}_t(\vthh_t^0(h_t^0),\lmh)\leq\tb(\ep_c+T\Rb\dl_c)+\ep_c,
\eeq
\beq
\Vh_t(h_t^0)-\widecheck{V}_t(\vthh_t^0(h_t^0))\leq\tb(\ep_c+T\Rb\dl_c)+\ep_c.
\eeq
\end{thm}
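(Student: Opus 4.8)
The plan is to recognize that \pref{thm:ASCS} is exactly the single-agent approximate-information-state gap bound of \cite{Aditya_AIS_2019} applied to the coordinator's centralized POMDP: the full common state $H_t^0$ plays the role of the agent's action--observation history, the ASPS-based prescription $\lmh_t$ plays the role of the action, the common observation $O_{t+1}^0$ is the next observation, and the ASCS $\Zh_t^0 = \vthh_t^0(H_t^0)$ is the approximate information state, with (ASCS1)--(ASCS3) matching (AIS1)--(AIS3). Since \pref{alg:ASCS-ASPS} differs from \pref{alg:FCS-ASPS} only in that the state is compressed by $\vthh_t^0$ (the action set $\Om(\Lmh_t)$ being identical), I would prove both inequalities simultaneously by backward induction on $t$, writing $\beta_t \triangleq \tb(\ep_c + T\Rb\dl_c) + \ep_c$ for the target bound and maintaining the $V$-gap hypothesis $\Vh_{t+1}(h_{t+1}^0) - \widecheck V_{t+1}(\vthh_{t+1}^0(h_{t+1}^0)) \leq \beta_{t+1}$ for all $h_{t+1}^0$.

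For the base case $t = T$, both continuation values $\Vh_{T+1}$ and $\widecheck V_{T+1}$ are identically zero, so $\Qh_T(h_T^0,\lmh) - \widecheck Q_T(\vthh_T^0(h_T^0),\lmh)$ reduces to the reward-prediction gap, which (ASCS2) bounds by $\ep_c = \beta_T$. For the inductive step with $\zh_t^0 = \vthh_t^0(h_t^0)$, I would decompose the $Q$-gap into a reward term $(\mathrm{I}) = \E[R_t \mid h_t^0,\lmh] - \E[R_t \mid \zh_t^0,\lmh]$ and a continuation term $(\mathrm{II})$ built from $\Vh_{t+1}$ and $\widecheck V_{t+1}$. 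Term $(\mathrm{I})$ is at most $\ep_c$ by (ASCS2).

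The crux is term $(\mathrm{II})$, and the key enabling identity is the compatibility furnished by (ASCS1): applying the time-$(t+1)$ compression directly to the concatenated next FCS equals the recursive update, i.e.\ $\vthh_{t+1}^0\big((h_t^0,\lmh,O_{t+1}^0)\big) = \phih_t^0(\zh_t^0,\lmh,O_{t+1}^0)$. Using this, I would insert $\pm\,\E[\widecheck V_{t+1}(\vthh_{t+1}^0(H_{t+1}^0)) \mid h_t^0,\lmh]$ to split $(\mathrm{II})$ into $(\mathrm{IIa}) = \E[\Vh_{t+1}(H_{t+1}^0) - \widecheck V_{t+1}(\vthh_{t+1}^0(H_{t+1}^0)) \mid h_t^0,\lmh]$ and $(\mathrm{IIb})$. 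Since $H_{t+1}^0 = (h_t^0,\lmh,O_{t+1}^0)$ ranges over next FCSs, the induction hypothesis gives $(\mathrm{IIa}) \leq \beta_{t+1}$ pointwise under the expectation. For $(\mathrm{IIb})$, the compatibility identity makes the integrand the single function $o \mapsto \widecheck V_{t+1}(\phih_t^0(\zh_t^0,\lmh,o))$ evaluated under the two laws $\P(O_{t+1}^0 \mid h_t^0,\lmh)$ and $\P(O_{t+1}^0 \mid \zh_t^0,\lmh)$; bounding this expectation difference by the integral probability metric associated with $\Kc$ and invoking (ASCS3) (whose $\dl_c/2$ absorbs the total-variation factor of two) together with $\|\widecheck V_{t+1}\|_\infty \leq T\Rb$ yields $(\mathrm{IIb}) \leq T\Rb\dl_c$. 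This transportation step --- matching the function class for which $\Kc$ dominates the IPM and controlling the sup-norm of the compressed continuation value --- is where I expect the only real work; everything else is bookkeeping.

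Combining the pieces, $\Qh_t(h_t^0,\lmh) - \widecheck Q_t(\zh_t^0,\lmh) \leq \ep_c + \beta_{t+1} + T\Rb\dl_c$, and substituting $\beta_{t+1} = (\tb-1)(\ep_c + T\Rb\dl_c) + \ep_c$ telescopes to exactly $\beta_t$, establishing the first inequality. The value-function inequality then follows from $\Vh_t(h_t^0) = \max_{\lmh} \Qh_t(h_t^0,\lmh)$ and $\widecheck V_t(\zh_t^0) = \max_{\lmh} \widecheck Q_t(\zh_t^0,\lmh)$ via $\max_x f(x) - \max_x g(x) \leq \max_x (f(x) - g(x))$, applied to the already-established $Q$-gap, closing the induction.
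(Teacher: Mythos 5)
Your proof is correct and follows essentially the same route as the paper's: backward induction with the $Q$-gap split into a reward term controlled by (ASCS2), a change-of-measure term controlled by (ASCS3) together with $\|\Vh_{t+1}\|_\infty\leq T\Rb$, and a continuation-difference term controlled by the induction hypothesis, with (ASCS1) supplying the compatibility identity and the $V$-inequality following by taking the maximizing prescription. The only cosmetic difference is the choice of pivot in the continuation term (you add and subtract $\E[\widecheck{V}_{t+1}(\vthh_{t+1}^0(H_{t+1}^0))\mid h_t^0,\lmh]$ whereas the paper uses $\E[\Vh_{t+1}(H_{t+1}^0)\mid \vthh_t^0(h_t^0),\lmh]$), which changes nothing substantive.
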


\subsection{Main Result}
Our main result bounds the optimality gap of value functions obtained from performing DP with the general common and private representations satisfying the conditions in \pref{dff:ASPS} and \pref{dff:ASCS} as in \pref{alg:ASCS-ASPS}, in comparison to the optimal value functions computed from \pref{alg:FCS-FPS}.

\begin{thm}\label{thm:ASCS+ASPS}
Assume the reward function $R$ is uniformly bounded by $\Rb$. For any $h_t^0\in\Om(H_t^0)$ and $\ga^*\in$ $\argmax_\ga Q_t(h_t^0,\ga)$, there exists a $\lmh\in\Om(\Lmh_t)$ such that
\beq
Q_t(h_t^0,\ga^*)-\widecheck{Q}_t(\vthh_t^0(h_t^0),\lmh)\leq\frac{\tb(\tb+1)}{2}(\ep_p+T\Rb\dl_p)+(\tb+1)(\ep_c+\ep_p)+\tb T\Rb\dl_c,
\eeq
\beq
V_t(h_t^0)-\widecheck{V}_t(\vthh_t^0(h_t^0))\leq\frac{\tb(\tb+1)}{2}(\ep_p+T\Rb\dl_p)+(\tb+1)(\ep_c+\ep_p)+\tb T\Rb\dl_c,
\eeq
where $\tb=T-t$.
\end{thm}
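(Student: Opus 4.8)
The plan is to obtain this bound as a direct composition of \pref{thm:ASPS} and \pref{thm:ASCS} via a telescoping (triangle) inequality routed through the intermediate value functions $\Qh_t$ and $\Vh_t$ of \pref{alg:FCS-ASPS}. These intermediate quantities sit exactly between the two endpoints: $\Qh_t$ retains the true common state $H_t^0$ but already restricts to ASPS-based prescriptions, so the gap $Q_t-\Qh_t$ is governed by the \emph{private}-compression error (\pref{thm:ASPS}), while the residual gap $\Qh_t-\widecheck{Q}_t$ is governed by the \emph{common}-compression error (\pref{thm:ASCS}).

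Concretely, for the $Q$-function I would first invoke \pref{thm:ASPS} to produce a witness $\lmh\in\Om(\Lmh_t)$ with
\beq
Q_t(h_t^0,\ga^*)-\Qh_t(h_t^0,\lmh)\leq\frac{\tb(\tb+1)}{2}(\ep_p+T\Rb\dl_p)+(\tb+1)\ep_p,
\eeq
and then apply \pref{thm:ASCS}, which holds for \emph{every} prescription, to this same $\lmh$, yielding
\beq
\Qh_t(h_t^0,\lmh)-\widecheck{Q}_t(\vthh_t^0(h_t^0),\lmh)\leq\tb(\ep_c+T\Rb\dl_c)+\ep_c.
\eeq
Summing the two displays and collecting terms gives the claimed $Q$-bound; the only bookkeeping is that $\tb\ep_c+\ep_c=(\tb+1)\ep_c$ merges with $(\tb+1)\ep_p$ into $(\tb+1)(\ep_p+\ep_c)$, while the $\frac{\tb(\tb+1)}{2}(\ep_p+T\Rb\dl_p)$ and $\tb T\Rb\dl_c$ terms carry over unchanged.

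For the $V$-function the argument is even cleaner, since the $V$-bounds in both \pref{thm:ASPS} and \pref{thm:ASCS} carry no dependence on the choice of $\lmh$. I would simply add the two unconditional inequalities $V_t(h_t^0)-\Vh_t(h_t^0)\leq\cdots$ and $\Vh_t(h_t^0)-\widecheck{V}_t(\vthh_t^0(h_t^0))\leq\cdots$ through the intermediate $\Vh_t(h_t^0)$; the identical algebraic collection produces the stated bound, and the witness $\lmh$ already selected for the $Q$-part automatically satisfies the $V$-part, so a single $\lmh$ discharges the existential quantifier binding both inequalities in the statement.

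The only subtle point—which I regard as the crux rather than a genuine obstacle—is the quantifier alignment: \pref{thm:ASPS} supplies $\lmh$ \emph{existentially}, whereas \pref{thm:ASCS} must be consumed \emph{universally}. The composition is legitimate precisely because the common-state compression bound is established for all ASPS-based prescriptions, so substituting the particular prescription chosen by the private-state step is valid and requires no further optimization over $\lmh$. Beyond this alignment the result is pure addition of the two preceding bounds, so no new estimates on the model, the reward, or the compression maps are needed.
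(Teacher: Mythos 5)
Your proposal is correct and is exactly the paper's argument: the paper's proof is the one-line ``Combine \pref{thm:ASPS} and \pref{thm:ASCS},'' and your write-up is simply the explicit telescoping through $\Qh_t$ and $\Vh_t$ with the same witness $\lmh$ and the same algebraic collection of terms. Your observation about the quantifier alignment (existential $\lmh$ from \pref{thm:ASPS} fed into the universal bound of \pref{thm:ASCS}) is the right justification for why the composition is valid.
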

\begin{proof}
Combine \pref{thm:ASPS} and \pref{thm:ASCS}.
\end{proof}

We observe that the action compression induced by private state compression leads to a gap quadratic in remaining time $\tb=T-t$ (\pref{thm:ASPS}), and common state compression causes a gap linear in remaining time. Also, note that the gap decreases to $0$ as $(\ep_c,\dl_c,\ep_p,\dl_p)$ go to $0$. Having developed this result, the remaining questions for learning using the sample data from the environment are: how to learn the compression mappings $\vthh_{1:T}^{0:N}$ with small error; and how to learn good policies with the compressed representations. See \pref{app:framework} for a proposed scheme to answer both these questions using DL methods.

\subsection{Comparisons to Existing Schemes}\label{sec:comparison}
\cite{Demos_CI_2013} and \cite{Hamid_SPI_2018} provide lossless (performance-wise) compression. We refer to ASPS and its corresponding conditions with $\ep_p=\dl_p=0$ as SPS; similarly, we refer to ASCS and its corresponding conditions with $\ep_c=\dl_c=0$ as SCS. Missing proofs in this subsection are in \pref{app:comparison}.

\paragraph{Relation to \cite{Demos_CI_2013}.}
The private history is not compressed in \cite{Demos_CI_2013}, so it is clearly a special case of SPS. The BCS proposed in \cite{Demos_CI_2013} is a special case of SCS as well.
\begin{prop}\label{prop:BCS=SCS}
The BCS $\Pi_t=\P(S_t,H_t^{1:N}|H_t^0)$ satisfies the conditions of an SCS in \pref{dff:ASCS} with $\ep_c=\dl_c=0$.
\end{prop}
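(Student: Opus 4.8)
The plan is to verify the three SCS conditions, i.e.\ the conditions of \pref{dff:ASCS} with $\ep_c=\dl_c=0$, for the deterministic map $\vthh_t^0(h_t^0)\triangleq\Pi_t=\P(S_t,H_t^{1:N}\mid h_t^0)$. The first thing I would fix is the setting: since the private history is uncompressed in \cite{Demos_CI_2013}, the relevant ASPS is the identity, $\Zh_t^n=H_t^n$, so that the ASPS-based prescriptions $\Lmh_t$ coincide with the FPS-based prescriptions $\Ga_t$, and $A_t^n=\Ga_t^n(h_t^n)$ is determined by $(h_t^n,\Ga_t)$ with no residual dependence on $h_t^0$. This last point is what makes the belief a sufficient statistic, and I would flag it explicitly.

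The central observation I would isolate first is a factorization principle: because $\vthh_t^0$ is a deterministic function of $H_t^0$, any functional of the form $F(h_t^0)=\tilde F(\vthh_t^0(h_t^0))$ is constant on each fiber $\{h_t^0:\vthh_t^0(h_t^0)=\zh_t^0\}$, hence conditioning on $\{H_t^0=h_t^0\}$ and on $\{\Zh_t^0=\vthh_t^0(h_t^0)\}$ return the identical value $\tilde F(\vthh_t^0(h_t^0))$. Consequently, to obtain $\ep_c=\dl_c=0$ it suffices to show that the quantities appearing in (ASCS2) and (ASCS3) depend on $h_t^0$ only through $\Pi_t$ (and on $\Ga_t$). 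For (ASCS2) I would expand $\E[R(S_t,A_t)\mid h_t^0,\Ga_t]=\sum_{s_t,h_t^{1:N}}\Pi_t(s_t,h_t^{1:N})\,\E[R(S_t,A_t)\mid s_t,\Ga_t(h_t^{1:N})]$, which is manifestly a linear functional of $\Pi_t$ with coefficients fixed by $\Ga_t$ and the mean reward; the factorization principle then yields exact equality with $\E[R(S_t,A_t)\mid\pi_t,\Ga_t]$. For (ASCS3) I would likewise write $\P(O_{t+1}^0\mid h_t^0,\Ga_t)=\sum_{s_t,h_t^{1:N}}\Pi_t(s_t,h_t^{1:N})\sum_{s_{t+1}}\P_T(s_{t+1}\mid s_t,\Ga_t(h_t^{1:N}))\,\P_O(o_{t+1}^0\mid s_{t+1})$, with $\P_O(o^0\mid s)$ the marginal common-observation kernel; again this depends on $h_t^0$ only through $\Pi_t$, giving $\dl_c=0$.

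For (ASCS1) I would exhibit $\phih_t^0$ as the classical coordinator belief update of \cite{Demos_CI_2013}: starting from $\Pi_{t-1}$, (i) push forward through the deterministic action map $a_{t-1}^n=\Ga_{t-1}^n(h_{t-1}^n)$, the transition kernel $\P_T$, and the observation kernel $\P_O$ to form the joint law of $(S_t,H_t^{1:N},O_t^{0:N})$, and (ii) condition on the realized $O_t^0=o_t^0$ and renormalize, using $H_t^n=(H_{t-1}^n,A_{t-1}^n,O_t^n)$ and $H_t^0=(H_{t-1}^0,\Ga_{t-1},O_t^0)$. This filter takes as inputs exactly $(\Pi_{t-1},\Ga_{t-1},O_t^0)$, as (ASCS1) requires, and I would simply cite the corresponding update in \cite{Demos_CI_2013} rather than rederive it.

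I expect the only genuine subtlety to be the conditioning bookkeeping, not any hard estimate. Specifically, one must be careful that conditioning on the event $\{\Zh_t^0=\pi_t\}$ -- which may be realized by many distinct $h_t^0$ -- returns the same value as conditioning on a single $h_t^0$; this is precisely what the factorization principle secures, provided one has first confirmed that uncompressed private histories introduce no dependence of the action map on $h_t^0$ beyond $\Pi_t$. Once these two points are in place, (ASCS2) and (ASCS3) reduce to reading off that the relevant functionals are linear in $\Pi_t$, and the proposition follows.
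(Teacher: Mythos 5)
Your proposal is correct and follows essentially the same route as the paper: both verify (ASCS1) by appealing to the coordinator's Bayesian belief update from the cited work, and verify (ASCS2)--(ASCS3) by expanding the conditional expected reward and the common-observation distribution as functionals of $\Pi_t$ alone. Your explicit ``factorization principle'' for conditioning on $\Zh_t^0$ rather than $h_t^0$ is a bookkeeping point the paper leaves implicit, but it does not change the argument.
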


\paragraph{Relation to \cite{Hamid_SPI_2018}.}
Our conditions of SPS and \cite{Hamid_SPI_2018}'s conditions of SPI both lead to performance sufficiency of the space of SPI-based (or SPS-based) prescriptions. The two sets of conditions are similar but not exactly the same. Condition (SPI1) corresponds to (SPS1); however, (SPS1) is stricter since we require \emph{policy-independent compression}, while \cite{Hamid_SPI_2018} allow policy-dependent compression. Condition (SPI3) ensures \emph{future sufficiency} as does (SPS3). Conditions (SPI2) and (SPI4) together ensure \emph{present sufficiency} as does (SPS3).

\begin{prop}\label{prop:SPI-relation-1}
(SPS1) and (SPS3) imply (SPI3).
\end{prop}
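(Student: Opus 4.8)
The plan is to treat the given SPS representation $\Zh$ as the candidate SPI, so that the assertion (SPI3) to be proved reads $\P(\Zh_{t+1}^{1:N},O_{t+1}^0\,|\,h_t^0,h_t^{1:N},\ga_t,a_t)=\P(\Zh_{t+1}^{1:N},O_{t+1}^0\,|\,h_t^0,\zh_t^{1:N},\ga_t,a_t)$, where $\zh_t^{1:N}=\vthh_t^{1:N}(h_t^0,h_t^{1:N})$. First I would use (SPS1) to collapse $(\Zh_{t+1}^{1:N},O_{t+1}^0)$ into a deterministic image of the observations. Indeed, (SPS1) gives $\Zh_{t+1}^n=\phih_{t+1}^n(\Zh_t^n,\Ga_t,O_{t+1}^0,O_{t+1}^n)$, and on \emph{both} conditioning events we have $\Zh_t^{1:N}=\zh_t^{1:N}$ (on the left this is forced, since $\Zh_t^{1:N}$ is a deterministic function of $(h_t^0,h_t^{1:N})$) and $\Ga_t=\ga_t$. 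Hence on both events $(\Zh_{t+1}^{1:N},O_{t+1}^0)$ is the \emph{same} fixed deterministic function $F_{\zh_t^{1:N},\ga_t}$ of $O_{t+1}^{0:N}$, so each side of (SPI3) is the law of $F_{\zh_t^{1:N},\ga_t}(O_{t+1}^{0:N})$ under the corresponding conditional distribution of $O_{t+1}^{0:N}$, and the claim reduces to the single identity
\beq
\P(O_{t+1}^{0:N}\,|\,h_t^0,h_t^{1:N},\ga_t,a_t)=\P(O_{t+1}^{0:N}\,|\,h_t^0,\zh_t^{1:N},\ga_t,a_t).\tag{$\star$}
\eeq

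Next I would dispose of the extra conditioning on the prescription $\ga_t$, which is the one ingredient present in (SPI3) but absent from (SPS3). The key structural fact from the CI setup is that $\Ga_t$ is generated from the common state $H_t^0$ alone, so given $(h_t^0,h_t^{1:N})$ it is independent of $S_t$; since in addition $a_t=\ga_t(h_t^{1:N})$ is a deterministic function of $(\ga_t,h_t^{1:N})$, conditioning on $(\ga_t,a_t)$ leaves the law of $S_t$ equal to $\P(S_t\,|\,h_t^0,h_t^{1:N})$, exactly as does conditioning on $a_t$ alone. Because $O_{t+1}^{0:N}$ is generated from $(S_t,a_t)$ through $\P_T$ and $\P_O$, this yields $\P(O_{t+1}^{0:N}\,|\,h_t^0,h_t^{1:N},\ga_t,a_t)=\P(O_{t+1}^{0:N}\,|\,h_t^0,h_t^{1:N},a_t)$, and then (SPS3) with $\dl_p=0$ gives $\P(O_{t+1}^{0:N}\,|\,h_t^0,h_t^{1:N},a_t)=\P(O_{t+1}^{0:N}\,|\,h_t^0,\zh_t^{1:N},a_t)$, settling the left-hand side of $(\star)$.

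For the right-hand side of $(\star)$ I would expand by the law of total probability over the full private histories consistent with the conditioning, writing $\P(O_{t+1}^{0:N}\,|\,h_t^0,\zh_t^{1:N},\ga_t,a_t)=\sum_{h_t^{1:N}}\P(h_t^{1:N}\,|\,h_t^0,\zh_t^{1:N},\ga_t,a_t)\,\P(O_{t+1}^{0:N}\,|\,h_t^0,h_t^{1:N},\ga_t,a_t)$, where the sum runs over those $h_t^{1:N}$ with $\vthh_t^{1:N}(h_t^0,h_t^{1:N})=\zh_t^{1:N}$ and $\ga_t(h_t^{1:N})=a_t$. By the previous paragraph each summand equals $\P(O_{t+1}^{0:N}\,|\,h_t^0,\zh_t^{1:N},a_t)$, which does not depend on $h_t^{1:N}$; since the weights sum to one, the sum collapses to that common value, so the right-hand side of $(\star)$ also equals $\P(O_{t+1}^{0:N}\,|\,h_t^0,\zh_t^{1:N},a_t)$. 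Both sides of $(\star)$ thus coincide, and pushing forward through $F_{\zh_t^{1:N},\ga_t}$ delivers (SPI3).

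I expect the main obstacle to be precisely the handling of the prescription conditioning: (SPI3) conditions jointly on $(\ga_t,a_t)$ whereas (SPS3) conditions only on $a_t$, and on the compressed ($\zh$) side $a_t$ is \emph{not} determined by $(\zh_t^{1:N},\ga_t)$, so conditioning on $\ga_t$ is a priori informative about $S_t$. The resolution — conditioning on the underlying full history and invoking (SPS3) term by term so that the $h_t^{1:N}$-dependence drops out — is the delicate step; the remaining ingredients (the deterministic reduction via (SPS1) and the independence of $\Ga_t$ from the state given the common information) are routine given the model's information structure.
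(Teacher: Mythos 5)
Your proof is correct and uses the same three ingredients as the paper's own argument --- the deterministic recursive update from (SPS1), the irrelevance of $\ga_t$ beyond $a_t$ given the full private history (which the paper isolates as \pref{lem:gamma-to-a}), and (SPS3) with $\dl_p=0$ --- merely reorganized as a pushforward reduction to the identity $(\star)$ rather than carrying an indicator function through a single chain of equalities. Your explicit expansion of the compressed-side conditional over the histories in the fiber of $\zh_t^{1:N}$ spells out the step the paper dispatches with its closing remark that $z_t^{1:N}=\vth_t(h_t^0,h_t^{1:N})$ is implicit in (SPS3).
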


\begin{prop}\label{prop:SPI-relation-2}
(SPS2) and (SPI4) imply (SPI2).
\end{prop}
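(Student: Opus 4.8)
The plan is to prove \pref{prop:SPI-relation-2} by fixing an agent $n$ together with a realization $(h_t^0,h_t^n,a_t)$ (where $z_t^n=\vth_t^n(h_t^0,h_t^n)$ and $a_t=a_t^{1:N}$ is the joint action) and reducing the target identity (SPI2), namely $\E[R(S_t,A_t)\mid h_t^0,h_t^n,a_t]=\E[R(S_t,A_t)\mid h_t^0,z_t^n,a_t]$, to a statement purely about the posterior law of the other agents' private states $z_t^{-n}$. The two available ingredients are the joint present-sufficiency (SPS2) (i.e.\ (ASPS2) with $\ep_p=0$), which collapses conditioning on the full private histories $h_t^{1:N}$ to conditioning on the full private states $z_t^{1:N}$, and the cross-agent predictability (SPI4), which equates $\P(z_t^{-n}\mid h_t^0,h_t^n)$ with $\P(z_t^{-n}\mid h_t^0,z_t^n)$.

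First I would expand both sides by the law of total expectation, marginalizing the left-hand side over $h_t^{-n}$ and the right-hand side over $z_t^{-n}$. On the left, (SPS2) gives $\E[R(S_t,A_t)\mid h_t^0,h_t^n,h_t^{-n},a_t]=\E[R(S_t,A_t)\mid h_t^0,z_t^n,z_t^{-n},a_t]$ with $z_t^{-n}=\vth_t^{-n}(h_t^0,h_t^{-n})$; grouping the sum over $h_t^{-n}$ according to the value of $z_t^{-n}$ then rewrites the left-hand side as $\sum_{z_t^{-n}}\E[R(S_t,A_t)\mid h_t^0,z_t^n,z_t^{-n},a_t]\,\P(z_t^{-n}\mid h_t^0,h_t^n,a_t)$. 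The right-hand side is, verbatim, $\sum_{z_t^{-n}}\E[R(S_t,A_t)\mid h_t^0,z_t^n,z_t^{-n},a_t]\,\P(z_t^{-n}\mid h_t^0,z_t^n,a_t)$. Hence the claim reduces to the single identity $\P(z_t^{-n}\mid h_t^0,h_t^n,a_t)=\P(z_t^{-n}\mid h_t^0,z_t^n,a_t)$ for every $z_t^{-n}$.

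The main obstacle is precisely this posterior identity, because (SPI4) is stated \emph{without} conditioning on the action, whereas here $a_t$ appears in the conditioning and is in general correlated with $z_t^{-n}$ (indeed $a_t^{-n}$ is a function of $z_t^{-n}$). To remove it I would apply Bayes' rule and exploit the structural fact of the framework that actions are SPI-based prescriptions, $A_t^m=\Lm_t^m(Z_t^m)$, with the prescription $\Lm_t$ fixed by the common information $h_t^0$. Consequently the likelihood $\P(a_t\mid h_t^0,h_t^n,z_t^{-n})=\1\{a_t^n=\Lm_t^n(z_t^n)\}\,\1\{a_t^{-n}=\Lm_t^{-n}(z_t^{-n})\}$ depends on the private conditioning only through $(z_t^n,z_t^{-n})$ — in particular it is unchanged if $h_t^n$ is replaced by $z_t^n$. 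Writing the Bayes expansion of $\P(z_t^{-n}\mid h_t^0,h_t^n,a_t)$ in terms of this likelihood and the prior $\P(z_t^{-n}\mid h_t^0,h_t^n)$, substituting the prior by $\P(z_t^{-n}\mid h_t^0,z_t^n)$ via (SPI4), and noting that the identical likelihood factors appear in the Bayes expansion of $\P(z_t^{-n}\mid h_t^0,z_t^n,a_t)$, the two posteriors coincide, which closes the argument. The only points needing care are that $z_t^n$ is a deterministic coarsening of $(h_t^0,h_t^n)$ (so that conditioning on $z_t^n$ is well defined and the prescription value $\Lm_t^n(z_t^n)$ is unambiguous, and we may restrict to $a_t$ consistent with it) and that the normalizing denominators in the two Bayes expansions match term by term.
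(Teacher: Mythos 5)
Your proof follows essentially the same route as the paper's: both arguments marginalize over the other agents' private information, invoke (SPS2) to replace $\E[R(S_t,A_t)\mid h_t^0,h_t^{1:N},a_t]$ by $\E[R(S_t,A_t)\mid h_t^0,z_t^{1:N},a_t]$, and then invoke (SPI4) to replace the mixing distribution $\P(z_t^{-n}\mid h_t^0,h_t^n)$ by $\P(z_t^{-n}\mid h_t^0,z_t^n)$. The one place you diverge is the treatment of the conditioning on $a_t$. The paper never faces your ``main obstacle'': consistent with its convention elsewhere (e.g., the step ``$\ga_1$ and $\ga_2$ are exogenously given'' in the proof of \pref{lem:two-prescription}), it reads $\E[R(S_t,A_t)\mid h_t^0,h_t^n,a_t]$ as $\sum_{s_t}R(s_t,a_t)\P(s_t\mid h_t^0,h_t^n)$ --- the action at time $t$ is an exogenously specified argument of the reward, not an observed event that updates the posterior on $S_t$ or on $Z_t^{-n}$ --- so (SPI4) applies verbatim and no Bayes step is needed. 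Your alternative, keeping $a_t$ in the conditioning and cancelling the likelihood $\1\{a_t^n=\Lm_t^n(z_t^n)\}\,\1\{a_t^{-n}=\Lm_t^{-n}(z_t^{-n})\}$ against itself in the two Bayes expansions, is internally consistent, but it buys this at the cost of assuming the time-$t$ actions are generated by an SPI-based prescription fixed by the common information and of restricting to $a_t$ consistent with that prescription, whereas (SPI2) and (SPS2) are quantified over \emph{all} $a_t$ (including off-prescription ones, which is exactly what is needed when these conditions are used inside a maximization over prescriptions). Under the paper's exogenous-action reading your obstacle dissolves and your first two paragraphs already complete the proof.
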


Restricting to SPS, their BCS $\Pit_t=\P(S_t,Z_t^{1:N}|H_t^0)$ is a special case of SCS as well, so a result identical to \pref{prop:BCS=SCS} holds with FPS changed to SPS.

\paragraph{Relation to \cite{Weichao_ASPI_2020}.}
Their private state embedding does not require a recursive update (ASPS1), but demands injective functions $\vthh_t^{1:N}$. With this additional assumption they show linearity of the optimality gap in remaining time. For the common state, the BCS they consider is a special case of our SCS, just as the BCS of \cite{Hamid_SPI_2018}.

\section{\uppercase{Optimality Gap Analysis}}\label{sec:analysis}
In this section, we outline the optimality gaps introduced in \pref{sec:main-result}; details are in \pref{app:ASPS}.

\subsection{Supervisor's Functions}\label{sec:supervisor}
For better exposition, we introduce another set of $Q$/$V$ functions from an omniscient \emph{supervisor}'s perspective, for the original decision problem. 
The supervisor can access the \emph{union} of the information of all agents: at time $t$ the supervisor knows $H_t^{0:N}$. In contrast, coordinator's information is the \emph{intersection} of the information of all agents: $H_t^0$. The supervisor, however, only observes what is happening, lets the coordinator decide all the policies and prescriptions, and implements the coordinator's policies. Let $d_{1:T}^*$ be a coordinator's optimal policy solved using \pref{alg:FCS-FPS}, i.e. $d_t^*(h_t^0)=\argmax_{\ga_t\in\Om(\Ga_t)}Q_t(h_t^0,\ga_t)$. Then the $Q$/$V$ functions defined in \pref{alg:FCS-FPS} can be rewritten as
\beq
Q_t(h_t^0,\ga_t)=\E\left[\sum_{\tau=t}^TR_\tau\bigg|h_t^0,\ga_t,d_{t+1:T}^*\right],
\eeq
\beq\label{eq:5.3}
V_t(h_t^0)=\E\left[\sum_{\tau=t}^TR_\tau\bigg|h_t^0,d_{t:T}^*\right].
\eeq
The supervisor's $Q$/$V$ functions use similar concepts, but with \emph{supervisor's states} and \emph{coordinator's policies}.

\begin{dff}\label{dff:supervisor}
For any $h_t^{0:N}\in\Om(H_t^{0:N})$, $\ga_t\in\Om(\Ga_t)$, define the supervisor's $Q$ function as
\beq
Q_t^S(h_t^0,h_t^{1:N},\ga_t)
\triangleq\E\left[\sum_{\tau=t}^TR_\tau\bigg|h_t^0,h_t^{1:N},\ga_t,d_{t+1:T}^*\right],
\eeq
and the supervisor's $V$ function as
\beq\label{eq:5.5}
V_t^S(h_t^0,h_t^{1:N})\triangleq\E\left[\sum_{\tau=t}^TR_\tau\bigg|h_t^0,h_t^{1:N},d_{t:T}^*\right]=Q_t^S(h_t^0,h_t^{1:N},\ga_t^*),
\eeq
where $\ga_t^*\in\argmax_{\ga_t\in\Om(\Ga_t)}Q_t(h_t^0,\ga_t)$\footnote{The supervisor's $Q$ function and $V$ function are only defined when the FPS $h_t^{1:N}$ is \emph{admissible} under $h_t^0$, i.e. $\P(h_t^{1:N}|h_t^0)>0$. Throughout the rest of the paper, we assume that only admissible FPSs are considered.}.
\end{dff}

Then the coordinator's $Q$/$V$ functions can be expressed as the expectation of supervisor's $Q$/$V$ functions taken over the conditional distribution on FPSs given the FCS:
\beq\label{eq:supervisor-Q-expansion}
Q_t(h_t^0,\ga_t)=\sum_{h_t^{1:N}}\P(h_t^{1:N}|h_t^0)Q_t^S(h_t^0,h_t^{1:N},\ga_t),
\eeq
\beq
V_t(h_t^0)=\sum_{h_t^{1:N}}\P(h_t^{1:N}|h_t^0)V_t^S(h_t^0,h_t^{1:N}).
\eeq

\subsection{Proof of \pref{thm:ASPS}}\label{sec:ASPS-proof}
We first determine the relationship between the space of FPS-based prescriptions $\Om(\Ga_t)$ and the space of ASPS-based prescriptions $\Om(\Lmh_t)$. Consider a fixed $h_t^0$. Since the compression mappings $\Zh_t^{1:N}=\vthh_t^{1:N}(H_t^0,H_t^{1:N})$ are functions, there could be multiple $h_t^{1:N}$'s that are mapped to the same $\zh_t^{1:N}$. A $\lmh_t\in\Om(\Lmh_t)$ can thus be thought of as a special element of $\Om(\Ga_t)$ that prescribes the same action for all the FPSs $h_t^{1:N}$'s mapped to the same ASPS $\zh_t^{1:N}$. Hence, we can construct an injective \emph{extension mapping} from $\Om(\Lmh_t)$ to $\Om(\Ga_t)$ in this sense.

\begin{dff}\label{dff:5.9}
For any $h_t^0\in\Om(H_t^0)$, define the extension mapping $\psi_t:\Om(\Lmh_t)\times\Om(H_t^0)\ra\Om(\Ga_t)$ as follows: for any $h_t^{1:N}$ and $\lmh_t$, $\ga_t=\psi_t(\lmh_t,h_t^0)$ will first compress $h_t^{1:N}$ to $\zh_t^{1:N}=\vthh_t^{1:N}(h_t^0,h_t^{1:N})$ (hence $\psi_t$ implicitly depends on $\vthh_t$), then choose the action according to $\lmh_t(\zh_t^{1:N})$. That is,
\[
\ga_t(h_t^{1:N})=\psi_t(\lmh_t,h_t^0)(h_t^{1:N})\triangleq\lmh_t(\vthh_t^{1:N}(h_t^0,h_t^{1:N})).
\]
Given the compression $\vthh_t^{1:N}$, $\psi_t$ is well-defined. Under this circumstance and with an abuse of notation, $\ga_t=\psi_t(\lmh_t,h_t^0)$ will be written as $\ga_{\lmh_t,h_t^0}$ when the considered compression is clear from the context and will be referred to as the $\ga_t$ extended from $\lmh_t$ under $h_t^0$.
\end{dff}

The following proposition says that for any FCS one can find an ASPS-based prescription whose extension nearly achieves the same $Q$-value as an optimal prescription, up to a gap linear in the remaining time $\tb$. This implies that it nearly suffices to consider the class of prescriptions extended from ASPS-based prescriptions for DP purposes.

\begin{prop}\label{prop:ASPS-main}
Assume the reward function $R$ is uniformly bounded by $\Rb$. For any $h_t^0\in\Om(H_t^0)$ and $\ga^*\in\argmax_\ga Q_t(h_t^0,\ga)$, there exists a $\lmh\in\Om(\Lmh_t)$ with
\beq
\left|Q_t(h_t^0,\ga^*)-Q_t(h_t^0,\ga_{\lmh,h_t^0})\right|\leq\tb(\ep_p+T\Rb\dl_p)+\ep_p,
\eeq
which leads to
\beq
\left|V_t(h_t^0)-\max_{\lmh\in\Om(\Lmh_t)}Q_t(h_t^0,\ga_{\lmh,h_t^0})\right|\leq\tb(\ep_p+T\Rb\dl_p)+\ep_p.
\eeq
\end{prop}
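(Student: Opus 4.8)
The plan is to pass to the supervisor's $Q$/$V$ functions of \pref{dff:supervisor} and the extension mapping $\psi_t$ of \pref{dff:5.9}, and to reduce the claim to a comparison that is averaged over $\P(h_t^{1:N}|h_t^0)$. Using \eqref{eq:supervisor-Q-expansion}, each side decomposes as $Q_t(h_t^0,\ga_t)=\sum_{h_t^{1:N}}\P(h_t^{1:N}|h_t^0)\,Q_t^S(h_t^0,h_t^{1:N},\ga_t)$, so it suffices to produce a single $\lmh$ whose extension $\ga_{\lmh,h_t^0}$ makes $Q_t^S(h_t^0,h_t^{1:N},\ga_{\lmh,h_t^0})$ close to $Q_t^S(h_t^0,h_t^{1:N},\ga^*)$ simultaneously for all admissible $h_t^{1:N}$. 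I would build $\lmh$ class by class: on each preimage of $\vthh_t^{1:N}(h_t^0,\cdot)$ I assign the single action that is best for that whole equivalence class (in the sense of the class-averaged supervisor value), which by \pref{dff:5.9} is exactly an element of $\Om(\Lmh_t)$ viewed inside $\Om(\Ga_t)$.

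The engine of the proof is a backward induction on $t$ showing that the supervisor's value is approximately measurable with respect to the ASPS: whenever $\vthh_t^{1:N}(h_t^0,h_t^{1:N})=\vthh_t^{1:N}(h_t^0,\tilde h_t^{1:N})$, the two supervisor values differ by at most a quantity growing linearly in $\tb$. In the inductive step I would split $Q_t^S$ into its immediate-reward part and its continuation part. The reward part is controlled within $\ep_p$ by (ASPS2), since two private states in the same class receiving the same action have rewards both close to the ASPS-conditioned reward. For the continuation part I would insert the one-step observation kernel and use (ASPS3) with the uniform bound $|V_{t+1}^S|\le T\Rb$ to pay $T\Rb\dl_p$ for the mismatch in the predicted observation law, after which the remaining term compares two continuation values whose arguments again share a common ASPS; here (ASPS1) is essential, because the update $\Zh_{t+1}^n=\phih_{t+1}^n(\Zh_t^n,\Ga_t,O_{t+1}^0,O_{t+1}^n)$ sends equal-ASPS private states that receive the same prescription and observations to the same next-step ASPS, so the induction hypothesis at $t+1$ applies.

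Assembling the per-step estimates gives $\tb+1$ reward penalties of size $\ep_p$ and $\tb$ transition penalties of size $T\Rb\dl_p$, i.e. the stated $\tb(\ep_p+T\Rb\dl_p)+\ep_p$. Because $\lmh$ was chosen best-in-class, the extended $\ga_{\lmh,h_t^0}$ does at least as well as any class-constant prescription derived from $\ga^*$, up to these same penalties, which yields $Q_t(h_t^0,\ga^*)-Q_t(h_t^0,\ga_{\lmh,h_t^0})\le\tb(\ep_p+T\Rb\dl_p)+\ep_p$; the lower bound is immediate from optimality of $\ga^*$, and the displayed $V_t$ inequality follows by taking the maximum over $\lmh$ and recalling $V_t(h_t^0)=Q_t(h_t^0,\ga^*)$.

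The step I expect to be the main obstacle is the continuation part, and the reason is that the \emph{entire} prescription $\ga_t$, not merely the induced actions, is recorded in the next common state $(h_t^0,\ga_t,O_{t+1}^0)$. Replacing $\ga^*$ by $\ga_{\lmh,h_t^0}$ therefore changes the coordinator's posterior and makes the optimal continuation $d^*_{t+1:T}$ act at a genuinely different common state, and since that continuation is FPS-based it separates private states that an ASPS cannot, so the continuation value is not literally ASPS-measurable. Keeping the equal-ASPS comparison alive across this switch — rather than collapsing to a one-step (constant) estimate — is exactly what forces the linear-in-$\tb$ growth, and is where (ASPS1) together with the reward bound must be combined with care; I would expect the bulk of the technical work, and the hidden constant factors matching the $\ep_p/4$ and $\dl_p/8$ slack in \pref{dff:ASPS}, to live here.
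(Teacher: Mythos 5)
Your overall architecture matches the paper's: decompose via \eqref{eq:supervisor-Q-expansion}, build $\lmh$ class-by-class on the preimages of $\vthh_t^{1:N}(h_t^0,\cdot)$ (the paper copies $\ga^*$ at a representative of each class rather than taking a best-in-class action, but that difference is cosmetic), and run a backward induction showing $Q_t^S$ is approximately constant on equal-ASPS histories, splitting into a reward term controlled by (ASPS2) and a continuation term controlled by (ASPS3) plus (ASPS1). However, two genuine gaps remain. First, the obstacle you correctly flag at the end — that the entire prescription $\ga_t$ is recorded in $H_{t+1}^0$, so replacing $\ga^*$ by $\ga_{\lmh,h_t^0}$ changes the common state at which $d^*_{t+1:T}$ acts — is not something you resolve, and your guess that it is "exactly what forces the linear-in-$\tb$ growth" is wrong: the paper's \pref{lem:two-prescription} shows this switch costs \emph{nothing}. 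The resolution is to construct a relabeled continuation policy $d'_{t+1:T}$ that, on common histories containing $\ga_2$, mimics what $d^*$ would have done had $\ga_1$ been recorded; an induction shows the future statistics are identical under the two prescriptions with $d'$, and optimality of $d^*$ applied in both directions upgrades the resulting inequality to an equality. Without this lemma, neither your best-in-class domination step nor the claim that $Q_t^S(h_t^0,h,\ga)$ depends on $\ga$ only through $\ga(h)$ is justified, and the linear growth genuinely comes from elsewhere (the equal-ASPS comparison).

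Second, your induction does not close as stated. The inductive step you describe bounds $|Q_t^S(h_t^0,h_1,\ga)-Q_t^S(h_t^0,h_2,\ga)|$ for two equal-ASPS histories receiving the \emph{same} action $\ga(h_1)=\ga(h_2)$, but the continuation term requires the hypothesis at $t+1$ for $V_{t+1}^S$, i.e. for the \emph{optimal} prescription $\ga^*_{t+1}$, which may assign \emph{different} actions to the two equal-ASPS next-step histories. The paper bridges this with \pref{cor:ASPS}, a sandwich argument: one defines prescriptions $\ga'$ and $\ga''$ that each overwrite $\ga^*$ on one of the two histories with the other's action, uses \pref{lem:two-ASPS} and \pref{lem:two-prescription} to lower-bound $Q_t(h_t^0,\ga')$ and $Q_t(h_t^0,\ga'')$, and invokes $Q_t(h_t^0,\ga')\le Q_t(h_t^0,\ga^*)$ to squeeze $|V_t^S(h_t^0,h_1)-V_t^S(h_t^0,h_2)|$ within the same bound. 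Your proposal is silent on this step, so the induction hypothesis you invoke is strictly stronger than what your inductive step proves. Relatedly, the final bound requires \emph{two} applications of the equal-ASPS estimate (from $h$ to the representative under $\ga^*$, and back under $\ga_{\lmh,h_t^0}$), each worth $\tb(\ep_p+T\Rb\dl_p)/2+\ep_p/2$ — this is precisely why \pref{dff:ASPS} carries the $\ep_p/4$ and $\dl_p/8$ slack — whereas your single-pass accounting would only deliver half the budget for each leg.
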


Before proving this critical proposition we need a few intermediate results. The first key lemma says that with the same supervisor's state, the supervisor's $Q$-values for two different prescriptions will be the same as long as they prescribe the same action for the given private information.

\begin{lem}\label{lem:two-prescription}
For any $h_t^0\in\Om(H_t^0)$ and $h\in\Om(H_t^{1:N})$, let $\ga_1,\ga_2\in\Om(\Ga_t)$ be two prescriptions that choose the same action on $h$, i.e. $\ga_1(h)=\ga_2(h)=a$; then
\beq
Q_t^S(h_t^0,h,\ga_1)=Q_t^S(h_t^0,h,\ga_2).
\eeq
\end{lem}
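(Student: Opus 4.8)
The plan is to exploit the one-step recursive structure of the supervisor's $Q$ function and reduce the claim to a statement about how a prescription enters the realized trajectory. Writing $a=\ga_1(h)=\ga_2(h)$, I would first split off the time-$t$ contribution using the tower property,
\[
Q_t^S(h_t^0,h,\ga)=\E[R(S_t,a)\mid h_t^0,h]+\E\big[V_{t+1}^S(H_{t+1}^0,H_{t+1}^{1:N})\mid h_t^0,h,A_t=a\big],
\]
with $H_{t+1}^0=(h_t^0,\ga,O_{t+1}^0)$ and $H_{t+1}^{1:N}=(h,a,O_{t+1}^{1:N})$. The first term depends on $\ga$ only through $a$ and is thus common to $\ga_1$ and $\ga_2$. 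For the second term I would argue that, conditioned on $H_t^{1:N}=h$ and $A_t=a$, the joint law of $(S_{t+1},O_{t+1}^{0:N})$, and hence of the realized next private history $H_{t+1}^{1:N}$, is determined entirely by $(S_t,a)$: the actions that $\ga$ would assign to private histories other than $h$ never materialize, so they cannot affect the physical transition or the observations.

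The one genuinely delicate point is that the coordinator state $H_{t+1}^0$ records the \emph{entire} prescription $\ga$, not just $a$, and this recorded $\ga$ is fed into every subsequent prescription $\Ga_\tau=d_\tau^*(H_\tau^0)$ selected by the fixed optimal policy $d^*$. A priori this could make the realized future actions $A_{t+1:T}$, and therefore the continuation value $V_{t+1}^S(H_{t+1}^0,\cdot)$, sensitive to the components of $\ga$ on which $\ga_1$ and $\ga_2$ disagree. The heart of the proof is to show that this sensitivity disappears once we condition on the full realized private history---equivalently, that $V_{t+1}^S(H_{t+1}^0,H_{t+1}^{1:N})$ depends on the prescription slot of $H_{t+1}^0$ only through actions taken on histories that actually occur. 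I would establish this by backward induction on $t$, using $V_\tau^S(\cdot)=Q_\tau^S(\cdot,d_\tau^*(\cdot))$ to push the invariance from step $\tau+1$ back to step $\tau$; the base case $t=T$ is immediate, since $V_T^S(h_T^0,h)=\E[R(S_T,\Ga_T(h))\mid h_T^0,h]$ with $\Ga_T=d_T^*(h_T^0)$ fixed, leaving no prescription freedom at time $T$.

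I expect exactly this coordinator-state coupling to be the main obstacle: it is the step where one must rule out that the counterfactual prescriptions (those acting on non-occurring private histories) leak into the realized reward stream through the coordinator's policy and belief. Once the path-wise invariance is in hand, the remaining expectation over $O_{t+1}^{0:N}$ has a law that does not involve the disagreeing part of $\ga$, so the two continuation terms coincide and $Q_t^S(h_t^0,h,\ga_1)=Q_t^S(h_t^0,h,\ga_2)$ follows. If the invariance fails without further hypotheses, this is precisely where a structural property of the common-information model would need to be invoked.
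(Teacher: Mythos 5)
Your decomposition of $Q_t^S$ into an instantaneous term (which depends on $\ga$ only through $a=\ga(h)$) and a continuation term matches the paper, and you have correctly isolated the real difficulty: the full prescription $\ga$ is recorded in $H_{t+1}^0$ and is fed into every future prescription $\Ga_\tau=d_\tau^*(H_\tau^0)$. The gap is in your proposed resolution. The pointwise invariance you want to establish by backward induction --- that $V_{t+1}^S\big((h_t^0,\ga_1,o_{t+1}^0),(h,a,o_{t+1}^{1:N})\big)=V_{t+1}^S\big((h_t^0,\ga_2,o_{t+1}^0),(h,a,o_{t+1}^{1:N})\big)$ --- is not true in general, and the induction cannot close. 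The reason is that $d_\tau^*$ selects a prescription by maximizing the \emph{coordinator's} $Q$-function, i.e., an average over private histories weighted by the posterior $\P(h_\tau^{1:N}|h_\tau^0)$; since $\ga_1$ and $\ga_2$ differ off $h$ and the coordinator does not observe $h$, the two branches carry different posteriors, so $d_\tau^*(h_t^0,\ga_1,\dots)$ and $d_\tau^*(h_t^0,\ga_2,\dots)$ may prescribe \emph{different actions on the realized private history}. This already breaks your base case when $N\geq 2$ (agent $n$'s optimal terminal action for its own history must hedge over the other agents' histories under the branch-dependent belief), and even for $N=1$ it can fail through tie-breaking in the argmax. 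Once the realized actions differ, the realized reward streams differ and there is no pathwise invariance to propagate.

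The paper's proof uses a genuinely different mechanism, which is the idea your proposal is missing: an exchange-plus-optimality (sandwich) argument. Define an auxiliary coordinator policy $d'_{t+1:T}$ that, on common histories of the form $(h_t^0,\ga_2,h_{t+1:\tau}^0)$, plays $d_\tau^*(h_t^0,\ga_1,h_{t+1:\tau}^0)$, and equals $d^*$ elsewhere. A forward induction --- whose engine is exactly your observation that the law of $(S_{t+1},O_{t+1}^{0:N})$ given $(h_t^0,h)$ depends on $\ga$ only through $(S_t,a)$ --- shows that the joint law of $(S_{t+1:T},O_{t+1:T}^{0:N},A_{t+1:T})$ given $(h_t^0,h,\ga_2,d')$ equals that given $(h_t^0,h,\ga_1,d^*)$. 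Hence $\E[V_{t+1}^S|h_t^0,h,\ga_1]=\E[\sum_{\tau>t}R_\tau|h_t^0,h,\ga_2,d']\leq\E[\sum_{\tau>t}R_\tau|h_t^0,h,\ga_2,d^*]=\E[V_{t+1}^S|h_t^0,h,\ga_2]$, and the symmetric construction gives the reverse inequality, forcing equality. The point is that equality of the two $d^*$-values is obtained indirectly, by sandwiching between feasible competitor policies that replicate the other branch's realized behavior, not by showing that the optimal policy itself behaves identically on the two branches (it need not). Without this construction your argument does not go through.
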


Next we show that the supervisor's $Q$-values will be nearly the same for two different FPSs that map to the same ASPS and a prescription that prescribes the same action on these two FPSs.

\begin{lem}\label{lem:two-ASPS}
Assume $R$ is uniformly bounded by $\Rb$. For any $h_t^0\in\Om(H_t^0)$, let $h_1,h_2\in\Om(H_t^{1:N})$ be two FPSs under $h_t^0$ that map to the same ASPS $\zh\in\Om(\Zh_t^{1:N})$, i.e. $\zh=\vthh_t^{1:N}(h_t^0,h_1)=\vthh_t^{1:N}(h_t^0,h_2)$, and let $\ga\in\Om(\Ga_t)$ be a prescription that chooses the same action on these two FPSs $\ga(h_1)=\ga(h_2)=a$. Then
\beq
\left|Q_t^S(h_t^0,h_1,\ga)-Q_t^S(h_t^0,h_2,\ga)\right|\leq\tb(\ep_p+T\Rb\dl_p)/2+\ep_p/2.
\eeq
\end{lem}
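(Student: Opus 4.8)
The plan is to induct on the remaining horizon $\tb=T-t$, peeling off one Bellman step at a time and charging each step an $\ep_p/2$ from the reward condition \textbf{(ASPS2)} and a $T\Rb\dl_p/2$ from the observation condition \textbf{(ASPS3)}, while the continuation error recurses as $f(\tb-1)$, where $f(\tb)$ is the claimed bound $\tb(\ep_p+T\Rb\dl_p)/2+\ep_p/2$. For the base case $\tb=0$ (i.e.\ $t=T$), the future is empty and $Q_T^S(h_T^0,h_i,\ga)=\E[R(S_T,a)\mid h_T^0,h_i]$ since $\ga(h_1)=\ga(h_2)=a$; inserting the ASPS-conditioned reward $\E[R(S_T,a)\mid h_T^0,\zh]$ and applying \textbf{(ASPS2)} twice through the triangle inequality gives a difference $\le\ep_p/2=f(0)$.

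For the inductive step I would use the Bellman form of the supervisor's value from \pref{dff:supervisor}, namely $Q_t^S(h_t^0,h_i,\ga)=\E[R(S_t,a)\mid h_t^0,h_i]+\E[V_{t+1}^S(H_{t+1}^0,H_{t+1}^{1:N})\mid h_t^0,h_i,a]$, with $H_{t+1}^0=(h_t^0,\ga,O_{t+1}^0)$ and $H_{t+1}^{1:N}=(h_i,a,O_{t+1}^{1:N})$. The instantaneous terms differ by at most $\ep_p/2$ exactly as in the base case. For the continuation I would split the difference of the two expectations into a law-mismatch piece, where the single integrand $V_{t+1}^S$ is integrated against $\P(O_{t+1}^{0:N}\mid h_t^0,h_1,a)$ versus $\P(O_{t+1}^{0:N}\mid h_t^0,h_2,a)$, and a common-observation piece, where the two continuation states are compared under a shared observation $o$. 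The first piece is controlled by \textbf{(ASPS3)}: both laws are within $\dl_p/8$ of $\P(O_{t+1}^{0:N}\mid h_t^0,\zh,a)$, hence within $\dl_p/4$ of each other, and since $\|V_{t+1}^S\|_\infty\le T\Rb$ this contributes at most $T\Rb\dl_p/2$. For the second piece, the key structural fact is that $g_1=(h_1,a,o^{1:N})$ and $g_2=(h_2,a,o^{1:N})$ map to the \emph{same} ASPS at time $t+1$: by \textbf{(ASPS1)} the update $\phih_{t+1}^n$ sends the common earlier component $\zh^n$, the common prescription $\ga$, and the common observations $(o^0,o^n)$ to a common $\Zh_{t+1}^n$. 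Thus this piece reduces to bounding $\big|V_{t+1}^S(h_{t+1}^0,g_1)-V_{t+1}^S(h_{t+1}^0,g_2)\big|$ by $f(\tb-1)$.

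This last reduction is the main obstacle, and it is where \pref{lem:two-prescription} does the real work. Both states are evaluated under the \emph{same} coordinator-optimal continuation prescription $\ga^\ast=d_{t+1}^\ast(h_{t+1}^0)$, because the common state is identical, so $V_{t+1}^S(h_{t+1}^0,g_i)=Q_{t+1}^S(h_{t+1}^0,g_i,\ga^\ast)$. By \pref{lem:two-prescription}, $Q_{t+1}^S(h_{t+1}^0,g,\ga)$ depends on $\ga$ only through the single action $\ga(g)$, so I may write it as $\tilde Q(g,\ga(g))$; and for any joint action $b$, applying the inductive hypothesis to a prescription that issues $b$ on both $g_1$ and $g_2$ (admissible precisely because they share an ASPS) yields $|\tilde Q(g_1,b)-\tilde Q(g_2,b)|\le f(\tb-1)$ for every $b$. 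The delicate point is that $\ga^\ast$ need not assign the \emph{same} action to $g_1$ and $g_2$, so one cannot apply the hypothesis to $V_{t+1}^S$ directly; the fix is to align the action by passing from $\ga^\ast$ to a prescription equalizing its value on the two states via \pref{lem:two-prescription} and then to argue that the optimal continuation values inherit the uniform per-action gap $f(\tb-1)$ through the $1$-Lipschitzness of the action optimization. Getting this alignment right is genuinely the subtle part, since the coordinator's optimal prescription is selected \emph{jointly} across all private states rather than state-by-state; everything else is bookkeeping.

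Once the common-observation piece is bounded by $f(\tb-1)$, summing the three contributions yields the recursion $f(\tb)=\ep_p/2+T\Rb\dl_p/2+f(\tb-1)$, and unrolling it from $f(0)=\ep_p/2$ gives exactly $\tb(\ep_p+T\Rb\dl_p)/2+\ep_p/2$, as claimed.
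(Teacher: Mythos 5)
Your decomposition is the paper's own: the same induction on the remaining horizon $\tb$, the same $\ep_p/4+\ep_p/4$ treatment of the instantaneous term via (ASPS2), the same splitting of the continuation term into a law-mismatch piece controlled by (ASPS3) (contributing $T\Rb\dl_p/2$) and an integrand-mismatch piece under a shared observation, and the same use of (ASPS1) to conclude that $(h_1,a,o^{1:N})$ and $(h_2,a,o^{1:N})$ map to the same ASPS at $t+1$. The constants and the recursion $f(\tb)=\ep_p/2+T\Rb\dl_p/2+f(\tb-1)$ also match. (You silently replace conditioning on $\ga$ by conditioning on $a=\ga(h_i)$ when writing the next-step law; the paper isolates this as \pref{lem:gamma-to-a}, but that step is routine.)

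The genuine gap is the step you yourself flag as ``the subtle part'' and then leave unproved: bounding $\left|V_{t+1}^S(h_{t+1}^0,g_1)-V_{t+1}^S(h_{t+1}^0,g_2)\right|$ by $f(\tb-1)=(\tb-1)(\ep_p+T\Rb\dl_p)/2+\ep_p/2$ when the optimal continuation prescription $\ga^*$ may assign different actions to $g_1$ and $g_2$. Your proposed mechanism, ``$1$-Lipschitzness of the action optimization,'' would be valid if $V_{t+1}^S(h_{t+1}^0,g)$ were $\max_b \tilde{Q}(g,b)$ in your notation, since then $|\max_b \tilde{Q}(g_1,b)-\max_b \tilde{Q}(g_2,b)|\le\max_b|\tilde{Q}(g_1,b)-\tilde{Q}(g_2,b)|$. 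But by \pref{dff:supervisor} it equals $\tilde{Q}(g,\ga^*(g))$ for a $\ga^*$ maximizing the coordinator's \emph{averaged} objective \eqref{eq:supervisor-Q-expansion}, and because prescriptions act componentwise across agents, that maximization does not decompose FPS-by-FPS; so $\ga^*(g_i)$ need not be a per-state argmax and the max-difference inequality does not apply. The paper closes exactly this hole with \pref{cor:ASPS}: writing $\upsilon_i=Q_{t+1}^S(h_{t+1}^0,g_i,\ga^*)$ and $b_i=\ga^*(g_i)$, it perturbs $\ga^*$ to a prescription $\ga'$ that assigns $b_1$ to both $g_1$ and $g_2$ (and to $\ga''$ assigning $b_2$ to both); the terms for $g\neq g_1,g_2$ in the expansion \eqref{eq:supervisor-Q-expansion} are unchanged by \pref{lem:two-prescription}, the induction hypothesis gives $Q_{t+1}^S(h_{t+1}^0,g_2,\ga')\ge\upsilon_1-f(\tb-1)$, and the coordinator-level optimality $Q_{t+1}(h_{t+1}^0,\ga')\le Q_{t+1}(h_{t+1}^0,\ga^*)$ then yields $\upsilon_1-\upsilon_2\le f(\tb-1)$, with the symmetric bound from $\ga''$. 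Note this makes the induction a \emph{joint} one over \pref{lem:two-ASPS} and \pref{cor:ASPS}. Supplying this two-sided perturbation argument (or an equivalent) is what your sketch is missing; with it, the rest of your proof goes through as written.
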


Using the above two lemmas, we show that the supervisor's $V$ function will differ little for two supervisor's states with the same compression of private states.

\begin{cor}\label{cor:ASPS}
Assume $R$ is uniformly bounded by $\Rb$. For any $h_t^0\in\Om(H_t^0)$, let $h_1,h_2\in\Om(H_t^{1:N})$ be two FPSs under $h_t^0$ that map to the same ASPS $\zh\in\Om(\Zh_t^{1:N})$, i.e. $\zh=\vthh_t(h_t^0,h_1)=\vthh_t(h_t^0,h_2)$, and let $\ga^*\in\argmax_\ga Q_t(h_t^0,\ga)$ be an optimal prescription. Then
\beq
\left|V_t^S(h_t^0,h_1)-V_t^S(h_t^0,h_2)\right|\triangleq\left|Q_t^S(h_t^0,h_1,\ga^*)-Q_t^S(h_t^0,h_2,\ga^*)\right|\leq\tb(\ep_p+T\Rb\dl_p)/2+\ep_p/2.
\eeq
\end{cor}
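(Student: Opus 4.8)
The plan is to start from the definitions and reduce the corollary to a statement about the supervisor's $Q$-function at a single fixed optimal prescription. By \eqref{eq:5.5} we have $V_t^S(h_t^0,h_i)=Q_t^S(h_t^0,h_i,\ga^*)$ for $i=1,2$ with the \emph{same} optimal $\ga^*$, so the claimed identity in the statement is immediate and it only remains to bound $|Q_t^S(h_t^0,h_1,\ga^*)-Q_t^S(h_t^0,h_2,\ga^*)|$. The first step is to invoke \pref{lem:two-prescription}: since, at the fixed $h_t^0$, the quantity $Q_t^S(h_t^0,h,\ga)$ depends on $\ga$ only through the joint action $\ga(h)$ it prescribes at $h$, I would write $q(h,a)\triangleq Q_t^S(h_t^0,h,\ga)$ for any $\ga$ with $\ga(h)=a$, and set $a_1=\ga^*(h_1)$ and $a_2=\ga^*(h_2)$. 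The target then becomes $|q(h_1,a_1)-q(h_2,a_2)|$.

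Next I would manufacture the hypotheses of \pref{lem:two-ASPS}, which applies only when a single prescription assigns the \emph{same} action to $h_1$ and $h_2$. For each candidate action $a\in\{a_1,a_2\}$ a product-form prescription assigning $a$ to both $h_1$ and $h_2$ always exists, since setting each agent's component to $a^n$ at both $h_1^n$ and $h_2^n$ is consistent regardless of whether these private states coincide. Applying \pref{lem:two-ASPS} to these two prescriptions, and using \pref{lem:two-prescription} to discard the rest of each prescription, yields the matched-action bounds $|q(h_1,a)-q(h_2,a)|\le\eta$ for $a\in\{a_1,a_2\}$, where $\eta=\tb(\ep_p+T\Rb\dl_p)/2+\ep_p/2$.

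To close I would invoke optimality of $\ga^*$. Expanding through \eqref{eq:supervisor-Q-expansion}, $Q_t(h_t^0,\ga)=\sum_{h}\P(h|h_t^0)\,q(h,\ga(h))$, and by \pref{lem:two-prescription} each summand depends only on the action assigned at that FPS. I would argue that $\ga^*$ therefore assigns each admissible FPS a value-maximizing action, i.e. $q(h_i,a_i)=\max_a q(h_i,a)$, after which the elementary inequality $|\max_a f(a)-\max_a g(a)|\le\max_a|f(a)-g(a)|$ applied to $f=q(h_1,\cdot)$, $g=q(h_2,\cdot)$ together with the per-action bound $\eta$ finishes the proof; equivalently one can extract the two local comparisons $q(h_1,a_1)\ge q(h_1,a_2)$ and $q(h_2,a_2)\ge q(h_2,a_1)$ and sandwich them against the matched-action bounds.

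I expect this last step to be the main obstacle. In the single-agent AIS setting the ``per-state optimality'' is automatic, but here prescriptions are \emph{product-form}: agent $n$'s choice at one private state fixes its action across \emph{every} joint FPS containing that component, so one cannot re-optimize a single joint FPS in isolation, and a globally optimal product prescription need not be locally optimal at each $h_i$. My plan is to exploit \pref{lem:two-prescription} to localize the dependence, replacing $\ga^*$ by a product prescription that matches it at $h_1$ and $h_2$ while remaining freely modifiable elsewhere, and thereby justify the per-FPS maximization; if only the matched-action bounds are ultimately available, a triangle-inequality sandwich still delivers the stated bound at the cost of a looser constant. Converting the product-structured global optimality into the per-FPS optimality used above is the crux of the argument.
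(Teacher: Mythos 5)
Your reduction to the function $q(h,a)\triangleq Q_t^S(h_t^0,h,\ga)$ for any $\ga$ with $\ga(h)=a$ (justified by \pref{lem:two-prescription}) and your matched-action bounds $|q(h_1,a)-q(h_2,a)|\le B_t$ for $a\in\{a_1,a_2\}$, where $B_t\triangleq\tb(\ep_p+T\Rb\dl_p)/2+\ep_p/2$ (obtained from \pref{lem:two-ASPS} applied to a prescription assigning $a$ to both FPSs), are exactly the right ingredients, and the ``two local comparisons'' $q(h_1,a_1)\ge q(h_1,a_2)$ and $q(h_2,a_2)\ge q(h_2,a_1)$ that you mention at the end are precisely what the paper establishes and sandwiches against the matched-action bounds. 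But you have not proved those comparisons, and you say so yourself; that is a genuine gap, not a routine detail. Your primary route --- per-FPS optimality $q(h_i,a_i)=\max_a q(h_i,a)$ followed by $|\max_a f(a)-\max_a g(a)|\le\max_a|f(a)-g(a)|$ --- asserts more than is needed and is never justified, and your fallback of a pure triangle-inequality sandwich from the matched-action bounds alone cannot work at \emph{any} constant: writing $q(h_1,a_1)-q(h_2,a_2)=[q(h_1,a_1)-q(h_2,a_1)]+[q(h_2,a_1)-q(h_2,a_2)]$, the second bracket compares two different actions at the same FPS and is completely uncontrolled by $(\ep_p,\dl_p)$ without an optimality argument.

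The paper closes the gap with a perturbation argument. It defines $\ga'$ to agree with $\ga^*$ everywhere except at $h_2$, where it prescribes $a_1$ (and symmetrically $\ga''$ swapping the roles). Expanding both $Q_t(h_t^0,\ga')$ and $Q_t(h_t^0,\ga^*)$ via \eqref{eq:supervisor-Q-expansion}, every term with $h\ne h_2$ cancels by \pref{lem:two-prescription}, so global optimality $Q_t(h_t^0,\ga')\le Q_t(h_t^0,\ga^*)$ localizes to $q(h_2,a_1)\le q(h_2,a_2)=\upsilon_2$; meanwhile \pref{lem:two-ASPS} applied to $\ga'$, which now assigns $a_1$ to both $h_1$ and $h_2$, gives $q(h_2,a_1)\ge\upsilon_1-B_t$, and chaining yields $\upsilon_1-\upsilon_2\le B_t$, with the reverse inequality by symmetry. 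I will add that the product-form obstruction you flag is real but is not resolved by the paper either: the single-point modification $\ga'$ is generally not expressible as a tuple $\ga'^{1:N}$ acting componentwise, so the step $Q_t(h_t^0,\ga')\le\max_{\ga\in\Om(\Ga_t)}Q_t(h_t^0,\ga)$ implicitly treats $\Om(\Ga_t)$ as containing all joint maps. Your diagnosis of the crux is therefore accurate, but your proposal leaves it open, whereas the paper supplies the concrete construction that your argument is missing.
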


\begin{proof}[Proof of \pref{prop:ASPS-main}]
Given an optimal prescription $\ga^*\in\argmax_\ga Q_t(h_t^0,\ga)$, we will specifically construct a $\lmh\in\Om(\Lmh_t)$ that serves for the claim. For each $\zh\in\Om(\Zh_t^{1:N})$, define
\beq
\Hc_{\zh}=\left\{h\in\Om(H_t^{1:N}):\vthh_t^{1:N}(h_t^0,h)=\zh\right\}
\eeq
to be the class of $h$'s in $\Om(H_t^{1:N})$ that are compressed to $\zh$ under the considered compression $\vthh_t^{1:N}$. By the Axiom of Choice, for each $\zh\in\Om(\Zh_t^{1:N})$ there exists a representative of the class $\Hc_{\zh}$ coming from arbitrary choice function, which we denote as $\hb_{\zh}$. We then construct the $\lmh$ by $\lmh(\zh)=\ga^*(\hb_{\zh})$; the corresponding extension in $\Om(\Ga_t)$ will be
\beq
\ga_{\lmh,h_t^0}(h)=\ga^*\left(\hb_{\vthh_t^{1:N}(h_t^0,h)}\right)\quad\forall\es h\in\Om(H_t^{1:N}),
\eeq
that is, the prescription first compresses the input FPS and finds the representative of the corresponding compression class, then it mimics what the optimal prescription would have done with the representative. For any $h\in\Om(H_t^{1:N})$, we have
\begin{align*}
\left|Q_t^S(h_t^0,h,\ga^*)-Q_t^S(h_t^0,h,\ga_{\lmh,h_t^0})\right|
&\leq\left|Q_t^S(h_t^0,h,\ga^*)-Q_t^S(h_t^0,\hb_{\vthh_t^{1:N}(h_t^0,h)},\ga^*)\right|\\
&\quad+\left|Q_t^S(h_t^0,\hb_{\vthh_t^{1:N}(h_t^0,h)},\ga^*)-Q_t^S(h_t^0,\hb_{\vthh_t^{1:N}(h_t^0,h)},\ga_{\lmh,h_t^0})\right|\\
&\quad+\left|Q_t^S(h_t^0,\hb_{\vthh_t^{1:N}(h_t^0,h)},\ga_{\lmh,h_t^0})-Q_t^S(h_t^0,h,\ga_{\lmh,h_t^0})\right|\\
&\leq(T-t)(\ep_p+T\Rb\dl_p)+\ep_p,
\end{align*}
as the first term is bounded by $(T-t)(\ep_p+T\Rb\dl_p)/2+\ep_p/2$ due to \pref{cor:ASPS}, the second term is $0$ due to \pref{lem:two-prescription}, and the third term is bounded by $(T-t)(\ep_p+T\Rb\dl_p)/2+\ep_p/2$ due to \pref{lem:two-ASPS}. If it happens to be the case that $h=\hb_{\vthh_t^{1:N}(h_t^0,h)}$, i.e. $h$ is the representative, then the original term $|Q_t^S(h_t^0,h,\ga^*)-Q_t^S(h_t^0,h,\ga_{\lmh,h_t^0})|$ is $0$. Taking the conditional expectation on $h$ given $h_t^0$ gives the claim.
\end{proof}

\begin{proof}[Proof of \pref{thm:ASPS}]
There are three main quantities: $V_t(h_t^0)$ is the value obtained from executing optimal FPS-based prescriptions to the end, $\max_{\lmh\in\Om(\Lmh_t)}Q_t(h_t^0,\ga_{\lmh,h_t^0})$ is from executing the optimal ASPS-based prescription for step $t$ and then optimal FPS-based prescriptions afterwards to the end, and $\Vh_t(h_t^0)$ is from executing optimal ASPS-based prescriptions to the end. \pref{prop:ASPS-main} establishes that restricting to ASPS-based prescriptions in \emph{one step} incurs a gap (between $V_t(h_t^0)$ and $\max_{\lmh\in\Om(\Lmh_t)}Q_t(h_t^0,\ga_{\lmh,h_t^0})$) \emph{linear} in $T-t$. Using an induction argument to accumulate this gap in \emph{every step} from $T$ back to $t$ yields the gap (between $V_t(h_t^0)$ and $\Vh_t(h_t^0)$) to be \emph{quadratic} in $T-t$. See \pref{app:ASPS} for detailed derivations.
\end{proof}

\section{\uppercase{Conclusion}}\label{sec:conclusion}
In this paper, we developed a general approximate state representation framework for MARL problems in a Dec-POMDP setting. We bounded the optimality gap in terms of the approximation error parameters and the number of remaining time steps. The theory provides guidance on designing deep-MARL algorithms, which has great potential in practical uses. Future directions include: exploring DL methods for applications using our framework, designing a representation for prescriptions, designing fully decentralized MARL schemes by adding communication, and extensions to general-sum games.

\bibliography{References}
\bibliographystyle{plainnat}

\newpage
\appendix
\section{Supplementary Details}\label{app:supp}

\subsection{More Related Work}\label{app:more-relate}
In \cite{Yuksel_FiniteMemory_2020} consider a special type of AIS -- the $N$-memory, which contains the information from the last $N$ steps. Here, the compression function is fixed but in contrast to \cite{Aditya_AIS_2020}, the approximation error given each history need not be uniform. When the model is known, they provide conditions that bound the regret of $N$-memory policies (policies that depend on $N$-memory), and an algorithm that finds optimal policies within this class. The first algorithm to learn the optimal policies of POMDPs with sub-linear regret in an online setting is proposed in \cite{Mehdi_POMDPOnline_2021}. Using a posterior sampling-based scheme, the algorithm maintains the posterior distribution on the unknown parameters of the considered POMDP, and adopts the optimal policy with respect to a set of parameters sampled from the distribution in each episode. The posterior update in the algorithm, however, heavily relies on the knowledge of the observation kernel, which is usually unknown in RL settings.

State representation for control is studied extensively in the literature \citep{Lesort_StateRep_2018}. Early work on predictive state representation (PSR) of POMDPs \citep{Littman_PSR_2002} only focuses on the encapsulation of the histories and does not explore its system prediction ability. The bisimulation relation clusters MDP states with similar rewards and transitions, and a bisimulation metric convexly combines the errors of the rewards and the transitions between two states \citep{Ferns_Bisim_2011}. The difference of the value functions of two states can be upper-bounded by the metric. The causal state representation \citep{Amy_CausalState_2021} for POMDPs clusters the histories in the space of AOHs that will produce the same future dynamics. Using the observation history as the state, the considered POMDP can be transformed into an MDP, so that the results from the bisimulation literature can be applied.

\subsection{DP with BCS}\label{app:BCS}
\begin{algorithm}
\caption{Dynamic Programming with BCSs and FPS-based Prescriptions}
$\bigdot{V}_{T+1}(\pi_{T+1})\triangleq0$\\
\For{$t=T,\dots,1$}{
$\bigdot{Q}_t(\pi_t,\ga_t)=\E\left[R(S_t,\Ga_t(H_t^{1:N}))+\bigdot{V}_{t+1}(\eta_t(\Pi_t^0,\Ga_t,O_{t+1}^0))|\Pi_t=\pi_t,\Ga_t=\ga_t\right]$\\
$\bigdot{V}_t(\pi_t)=\max_{\ga_t\in\Om(\Ga_t)}\bigdot{Q}_t(\pi_t,\ga_t)$
}
\end{algorithm}

\subsection{DP with BCS and SPI}\label{app:BCS+SPI}
\begin{algorithm}
\caption{Dynamic Programming with BCSs and SPI-based Prescriptions}
$\Vt_{T+1}(\pit_{T+1})\triangleq0$\\
\For{$t=T,\dots,1$}{
$\Qt_t(\pit_t,\lm_t)=\E\left[R(S_t,\Lm_t(Z_t^{1:N}))+\Vt_{t+1}(\etat_t(\Pit_t,\Lm_t,O_{t+1}^0))|\Pit_t=\pit_t,\Lm_t=\lm_t\right]$\\
$\Vt_t(\pit_t)=\max_{\lm_t\in\Om(\Lm_t)}\Qt_t(\pit_t,\lm_t)$
}
\end{algorithm}

\section{Omitted Analysis in \pref{sec:ASPS-proof}}\label{app:ASPS}

The following lemma shows that given the FPS, the actions the chosen prescription chooses for other FPSs does not affect the next step statistics.

\begin{lem}\label{lem:gamma-to-a}
Let $h_t^0\in\Om(H_t^0)$, $h\in\Om(H_t^{1:N})$, $\ga\in\Om(\Ga_t)$, and $a=\ga(h)\in\Om(A_t)$. Then
\[
\P(S_{t+1},H_{t+1}^{1:N}|H_t^0=h_t^0,H_t^{1:N}=h,\Ga_t=\ga)=\P(S_{t+1},H_{t+1}^{1:N}|H_t^0=h_t^0,H_t^{1:N}=h,A_t=a).
\]
\end{lem}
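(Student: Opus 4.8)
The plan is to expand both conditional distributions by the law of total probability over the current hidden state $S_t$ and show the resulting summands coincide term by term. Concretely, I would write
\[
\P(S_{t+1},H_{t+1}^{1:N}\mid h_t^0,h,\ga)=\sum_{s_t}\P(S_{t+1},H_{t+1}^{1:N}\mid s_t,h_t^0,h,\ga)\,\P(s_t\mid h_t^0,h,\ga),
\]
and the analogous expansion for the right-hand side with $\ga$ replaced by $a$. The proof then reduces to two facts: (i) the conditional law of $S_t$ given $(h_t^0,h)$ is unchanged by further conditioning on either the full prescription $\ga$ or the realized action $a$; and (ii) given $s_t$, $h$, and the realized action $a=\ga(h)$, the one-step statistics of $(S_{t+1},H_{t+1}^{1:N})$ are governed entirely by the model and do not see the rest of $\ga$.

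For (i), the key structural observation is that the prescription $\Ga_t$ is selected by the fictitious coordinator using only the common information $H_t^0$, with any randomization drawn independently of the past. Hence, conditioned on $H_t^0$, the prescription $\Ga_t$ is independent of $(S_t,H_t^{1:N})$; in particular $\Ga_t\perp S_t\mid(H_t^0,H_t^{1:N})$, giving $\P(s_t\mid h_t^0,h,\ga)=\P(s_t\mid h_t^0,h)$. Since $A_t=\Ga_t(h)$ is a deterministic function of $\Ga_t$ once $H_t^{1:N}=h$ is fixed, the same independence yields $A_t\perp S_t\mid(H_t^0,H_t^{1:N}=h)$ and therefore $\P(s_t\mid h_t^0,h,a)=\P(s_t\mid h_t^0,h)$. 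Both conditional laws of $S_t$ thus equal the common quantity $\P(s_t\mid h_t^0,h)$.

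For (ii), I would unfold the transition using the model: conditioning on $H_t^{1:N}=h$ and $\Ga_t=\ga$ forces the joint action to be exactly $A_t=\ga(h)=a$, after which $S_{t+1}\sim\P_T(s_t,a)$, the private observations $O_{t+1}^{1:N}$ are drawn from the marginal of $\P_O(S_{t+1})$, and $H_{t+1}^{1:N}=(h,A_t,O_{t+1}^{1:N})$ is a deterministic update. Every ingredient depends only on $(s_t,h,a)$; neither the values $\ga$ assigns to private states other than $h$, nor the common state $h_t^0$, enter. The identical computation holds when we condition on $A_t=a$ directly, so $\P(S_{t+1},H_{t+1}^{1:N}\mid s_t,h_t^0,h,\ga)=\P(S_{t+1},H_{t+1}^{1:N}\mid s_t,h_t^0,h,a)$. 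Substituting (i) and (ii) into the two expansions makes the summands identical, and summing over $s_t$ completes the argument.

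I expect the main obstacle to be step (i): cleanly justifying that conditioning on the entire prescription $\ga$ (a high-dimensional object encoding counterfactual behavior on all private states) versus only the realized action $a$ leaves the belief over $S_t$ unchanged. This hinges on the information structure---the prescription is a function of common information alone and carries no extra signal about the hidden state beyond what $(h_t^0,h)$ already provide---so I would state and invoke this conditional independence explicitly rather than treat it as self-evident.
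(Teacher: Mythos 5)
Your proposal is correct and matches the paper's own argument essentially step for step: the paper likewise expands over $s_t$, drops $\ga$ from the conditional law of $s_t$ on the grounds that the prescription is chosen after (and independently of) the hidden state given the histories, and then factors the one-step transition through $\P_T$ and $\P_O$ so that only the realized action $a=\ga(h)$ enters. Your explicit statement of the conditional independence $\Ga_t\perp S_t\mid(H_t^0,H_t^{1:N})$ is exactly the content of the paper's terse justification ``$\ga$ is after $s_t$,'' so there is nothing to add.
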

\begin{proof}
We will omit specifying the original random variables when their realizations are given in the proof.
\allowdisplaybreaks
\begin{align*}
&\eqsp\P(S_{t+1},H_{t+1}^{1:N}|h_t^0,h,\ga)=\P(S_{t+1},H_{t+1}^{1:N}|h_t^0,h,\ga,a)\\
&=\sum_{s_t}\P(s_t|h_t^0,h,\ga,a)\cdot\P(S_{t+1},H_{t+1}^{1:N}|h_t^0,h,\ga,a,s_t)\\
&=\sum_{s_t}\P(s_t|h_t^0,h,a)\cdot\P(S_{t+1}|h_t^0,h,\ga,a,s_t)\cdot\P(H_{t+1}^{1:N}|h_t^0,h,\ga,a,s_t,S_{t+1})\tag{$\ga$ is after $s_t$}\\
&=\sum_{s_t}\P(s_t|h_t^0,h,a)\cdot\P(S_{t+1}|a,s_t)\cdot\P(H_{t+1}^{1:N}|h_t^0,h,\ga,a,s_t,S_{t+1})\tag{$\P_T$ specifies $S_{t+1}$ given $S_t$ and $A_t$}\\
&=\sum_{s_t}\P(s_t|h_t^0,h,a)\cdot\P(S_{t+1}|a,s_t)
\cdot\P(O_{t+1}^{1:N}|h_t^0,h,\ga,a,s_t,S_{t+1})\tag{$H_{t+1}^{1:N}=(H_t^{1:N},A_t,O_{t+1}^{1:N})$}\\
&=\sum_{s_t}\P(s_t|h_t^0,h,a)\cdot\P(S_{t+1}|a,s_t)\cdot\P(O_{t+1}^{1:N}|S_{t+1})\tag{$\P_O$ specifies $O_{t+1}^{1:N}$ given $S_{t+1}$}\\
&=\sum_{s_t}\P(s_t|h_t^0,h,a)\cdot\P(S_{t+1},H_{t+1}^{1:N}|h_t^0,h,a,s_t)\\
&=\P(S_{t+1},H_{t+1}^{1:N}|h_t^0,h,a).
\end{align*}
\end{proof}

\begin{proof}[Proof of \pref{lem:two-prescription}]
The proof for the instantaneous part is straightforward as $S_t$ is irrelevant to the choice of $\Ga_t$
\begin{align*}
&\eqsp\E\left[R_t(S_t,\Ga_t(H_t^{1:N}))|h_t^0,h,\ga_1\right]=\sum_{s_t}\P(s_t|h_t^0,h,\ga_1)R_t(s_t,\ga_1(h))\\
&=\sum_{s_t}\P(s_t|h_t^0,h,\ga_1(h))R_t(s_t,a)\\
&=\sum_{s_t}\P(s_t|h_t^0,h)R_t(s_t,a)\tag{$\ga_1$ and $\ga_2$ are exogenously given}\\
&=\E\left[R_t(S_t,\Ga_t(H_t^{1:N}))|h_t^0,h,\ga_2\right]\tag{by symmetry}.
\end{align*}
To show equality for the continuation part, we first define the following policy for all $\tau=t+1,\dots,T$:
\[
d^\prime_\tau(h_\tau^0)=\left\{
\begin{array}{ll}
d_\tau^*(h_t^0,\ga_1,h_{t+1:\tau}^0)&\text{if }h_\tau^0=(h_t^0,\ga_2,h_{t+1:\tau}^0)\quad\forall\es h_{t+1:\tau}^0,\\
d_\tau^*(h_\tau^0)&\text{otherwise,}
\end{array}\right.
\]
where $d_\tau^*$ is an optimal policy at time step $\tau$. Also, we have $h_{t+1:\tau}^0=o_{t+1}^0$ when $\tau=t+1$, and $h_{t+1:\tau}^0=(o_{t+1}^0,\ga_{t+1},\dots,o_\tau^0)$ when $\tau>t+1$, so that the entire $(h_t^0,\ga_1,h_{t+1:\tau}^0)\in\Om(H_\tau^0)$. This policy performs the optimal policy at all times, except when $\ga_2$ is chosen at time $t$, it will mimic what the optimal policy would have done if $\ga_1$ was chosen instead; owing to perfect recall, future prescriptions can depend on past ones. Then
\bals
&\eqsp\E\left[V_{t+1}^S(H_{t+1}^0,H_{t+1}^{1:N})|h_t^0,h,\ga_1\right]
=\E\left[\sum_{\tau=t+1}^TR_\tau(S_\tau,A_\tau)\bigg|h_t^0,h,\ga_1,d_{t+1:T}^*\right]\\
&=\E\left[\sum_{\tau=t+1}^TR_\tau(S_\tau,A_\tau)\bigg|h_t^0,h,\ga_1,d^\prime_{t+1:T}\right]
\overset{(*)}{=}\E\left[\sum_{\tau=t+1}^TR_\tau(S_\tau,A_\tau)\bigg|h_t^0,h,\ga_2,d^\prime_{t+1:T}\right]\\
&\leq\E\left[\sum_{\tau=t+1}^TR_\tau(S_\tau,A_\tau)\bigg|h_t^0,h,\ga_2,d_{t+1:T}^*\right]
=\E\left[V_{t+1}^S(H_{t+1}^0,H_{t+1}^{1:N})|h_t^0,h,\ga_2\right];
\eals
where the inequality holds as $d^\prime_{t+1:T}$ is not an optimal choice from the current history. By symmetry, the inequality implies that
\[
\E\left[V_{t+1}^S(H_{t+1}^0,H_{t+1}^{1:N})|h_t^0,h,\ga_1\right]=\E\left[V_{t+1}^S(H_{t+1}^0,H_{t+1}^{1:N})|h_t^0,h,\ga_2\right].
\]
The equality labeled by $(*)$ follows from the fact that under the policy $d^\prime_{t+1:T}$, choosing $\ga_1$ and $\ga_2$ will generate the exact future statistics. We will show this in the following. We first prove the following claim using mathematical induction.
\\\textbf{Claim:} for all $\tau=t+1,\dots,T$, we have
\[
\P(S_{t+1:\tau},O_{t+1:\tau}^{0:N},A_{t+1:\tau}|h_t^0,h,\ga_1,a,d^\prime_{t+1:T})=\P(S_{t+1:\tau},O_{t+1:\tau}^{0:N},A_{t+1:\tau}|h_t^0,h,\ga_2,a,d^\prime_{t+1:T}).
\]
\underline{Base case}: the claim holds for $\tau=t+1$.
\allowdisplaybreaks
\begin{align*}
&\eqsp\P(S_{t+1},O_{t+1}^{0:N},A_{t+1}|h_t^0,h,\ga_1,a,d^\prime_{t+1:T})\\
&=\sum_{s_t}\P(s_t|h_t^0,h,\ga_1,a,d^\prime_{t+1:T})\cdot\P(S_{t+1},O_{t+1}^{0:N},A_{t+1}|h_t^0,h,\ga_1,a,s_t,d^\prime_{t+1:T})\\
&=\sum_{s_t}\P(s_t|h_t^0,h)\cdot\P(S_{t+1},O_{t+1}^{0:N},A_{t+1}|h_t^0,h,\ga_1,a,s_t,d^\prime_{t+1})\tag{$s_t$ is independent of $a$ given $\Ga_t$}\\
&=\sum_{s_t}\P(s_t|h_t^0,h)\cdot\P(S_{t+1}|h_t^0,h,\ga_1,a,s_t,d^\prime_{t+1})\cdot\P(O_{t+1}^{0:N},A_{t+1}|h_t^0,h,\ga_1,a,s_t,d^\prime_{t+1},S_{t+1})\\
&=\sum_{s_t}\P(s_t|h_t^0,h)\cdot\P(S_{t+1}|s_t,a)\cdot\P(O_{t+1}^{0:N},A_{t+1}|h_t^0,h,\ga_1,a,s_t,d^\prime_{t+1},S_{t+1})\tag{$\P_T$ specifies $S_{t+1}$ given $S_t$ and $A_t$}\\
&=\sum_{s_t}\P(s_t|h_t^0,h)\cdot\P(S_{t+1}|s_t,a)\cdot\P(O_{t+1}^{0:N}|h_t^0,h,\ga_1,a,s_t,d^\prime_{t+1},S_{t+1})\\
&\qquad\cdot\P(A_{t+1}|h_t^0,h,\ga_1,a,s_t,d^\prime_{t+1},S_{t+1},O_{t+1}^{0:N})\\
&=\sum_{s_t}\P(s_t|h_t^0,h)\cdot\P(S_{t+1}|s_t,a)\cdot\P(O_{t+1}^{0:N}|S_{t+1})\cdot\P(A_{t+1}|h_t^0,h,\ga_1,a,s_t,d^\prime_{t+1},S_{t+1},O_{t+1}^{0:N})\tag{$\P_O$ specifies $O_{t+1}^{0:N}$ given $S_{t+1}$}\\
&=\sum_{s_t}\P(s_t|h_t^0,h)\cdot\P(S_{t+1}|s_t,a)\cdot\P(O_{t+1}^{0:N}|S_{t+1})\cdot\I\left\{A_{t+1}=d^\prime_{t+1}(h_0,\ga_1,O_{t+1}^0)(h,a,O_{t+1}^{1:N})\right\}\\
&=\sum_{s_t}\P(s_t|h_t^0,h)\cdot\P(S_{t+1}|s_t,a)\cdot\P(O_{t+1}^{0:N}|S_{t+1})\cdot\I\left\{A_{t+1}=d^\prime_{t+1}(h_0,\ga_2,O_{t+1}^0)(h,a,O_{t+1}^{1:N})\right\}\tag{definition of $d^\prime_{t+1}$}\\
&=\P(S_{t+1},O_{t+1}^{0:N},A_{t+1}|h_t^0,h,\ga_2,a,d^\prime_{t+1:T})\tag{symmetric argument}.
\end{align*}
\underline{Induction step}: assuming the claim holds for $\tau$, we show it holds for $\tau+1$ as well.
\begin{align*}
&\eqsp\P(S_{t+1:\tau+1},O_{t+1:\tau+1}^{0:N},A_{t+1:\tau+1}|h_t^0,h,\ga_1,a,d^\prime_{t+1:T})\\
&=\P(S_{t+1:\tau},O_{t+1:\tau}^{0:N},A_{t+1:\tau}|h_t^0,h,\ga_1,a,d^\prime_{t+1:T})\\
&\qquad\cdot\P(S_{\tau+1},O_{\tau+1}^{0:N},A_{\tau+1}|h_t^0,h,\ga_1,a,d^\prime_{t+1:T},S_{t+1:\tau},O_{t+1:\tau}^{0:N},A_{t+1:\tau})\\
&=\P(S_{t+1:\tau},O_{t+1:\tau}^{0:N},A_{t+1:\tau}|h_t^0,h,\ga_2,a,d^\prime_{t+1:T})\\
&\qquad\cdot\P(S_{\tau+1},O_{\tau+1}^{0:N},A_{\tau+1}|h_t^0,h,\ga_1,a,d^\prime_{t+1:T},S_{t+1:\tau},O_{t+1:\tau}^{0:N},A_{t+1:\tau})\tag{induction hypothesis}\\
&=\P(S_{t+1:\tau},O_{t+1:\tau}^{0:N},A_{t+1:\tau}|h_t^0,h,\ga_2,a,d^\prime_{t+1:T})\cdot\P(S_{\tau+1}|S_\tau,A_\tau)\cdot\P(O_{\tau+1}^{0:N}|S_{\tau+1})\\
&\qquad\cdot\I\left\{A_{\tau+1}=d^\prime_{\tau+1}\left(h_t^0,\ga_1,O_{t+1}^0,d^\prime_{t+1}(h_t^0,\ga_1,O_{t+1}^0),O_{t+2}^0,\dots,O_{\tau+1}^0\right)(h,a,O_{t+1}^{1:N},A_{t+1},\dots,O_{\tau+1}^{1:N})\right\}\\
&\overset{(\dagger)}{=}\P(S_{t+1:\tau},O_{t+1:\tau}^{0:N},A_{t+1:\tau}|h_t^0,h,\ga_2,a,d^\prime_{t+1:T})\cdot\P(S_{\tau+1}|S_\tau,A_\tau)\cdot\P(O_{\tau+1}^{0:N}|S_{\tau+1})\\
&\qquad\cdot\I\left\{A_{\tau+1}=d^\prime_{\tau+1}\left(h_t^0,\ga_2,O_{t+1}^0,d^\prime_{t+1}(h_t^0,\ga_2,O_{t+1}^0),O_{t+2}^0,\dots,O_{\tau+1}^0\right)(h,a,O_{t+1}^{1:N},A_{t+1},\dots,O_{\tau+1}^{1:N})\right\}\\
&=\P(S_{t+1:\tau},O_{t+1:\tau}^{0:N},A_{t+1:\tau}|h_t^0,h,\ga_2,a,d^\prime_{t+1:T})\\
&\qquad\cdot\P(S_{\tau+1},O_{\tau+1}^{0:N},A_{\tau+1}|h_t^0,h,\ga_2,a,d^\prime_{t+1:T},S_{t+1:\tau},O_{t+1:\tau}^{0:N},A_{t+1:\tau})\\
&=\P(S_{t+1:\tau+1},O_{t+1:\tau+1}^{0:N},A_{t+1:\tau+1}|h_t^0,h,\ga_2,a,d^\prime_{t+1:T}),
\end{align*}
where the equality in $(\dagger)$ holds due to the definition of policy $d^\prime$. Note that with the CI-based approach, a generic policy $d_t$ first maps an FCS $H_t^0$ to a prescription $\Ga_t$, which in term maps an FPS $H_t^{1:N}$ to an action $A_t$; therefore, $d_t(H_t^0)(H_t^{1:N})=\Ga_t(H_t^{1:N})=A_t$ refers to the final action $A_t$ under the policy $d_t$ and the supervisor's state $(H_t^0,H_t^{1:N})$. The claim implies that $\P(S_\tau,A_\tau|h_t^0,h,\ga_1,d^\prime_{t+1:T})=\P(S_\tau,A_\tau|h_t^0,h,\ga_2,d^\prime_{t+1:T})$ for all $\tau=t+1,\dots,T$, i.e. conditioning on $h_t^0,h,d^\prime_{t+1:T}$, the distribution of $(S_\tau,A_\tau)$ is exactly the same given $\ga_1$ or $\ga_2$; and $(S_\tau,A_\tau)$ where $\tau=t+1,\dots,T$ is what the expectations on both sides of $(*)$ are taken on.
\end{proof}

\begin{proof}[Proof of \pref{lem:two-ASPS}]
We preceed the proof by mathematical induction. The instantaneous part and the base case $t=T$ follow trivially from (ASPS2)
\begin{align*}
&\eqsp\left|\E\left[R_t(S_t,A_t)|h_t^0,h_1,\ga\right]-\E\left[R_t(S_t,A_t)|h_t^0,h_2,\ga\right]\right|\\
&\leq\left|\E\left[R_t(S_t,A_t)|h_t^0,h_1,\ga\right]-\E\left[R_t(S_t,A_t)|h_t^0,\zh,\ga\right]\right|+\left|\E\left[R_t(S_t,A_t)|h_t^0,\zh,\ga\right]-\E\left[R_t(S_t,A_t)|h_t^0,h_2,\ga\right]\right|\\
&\leq\ep_p/4+\ep_p/4\tag{(ASPS2)}\\
&=\ep_p/2.
\end{align*}
For the continuation part, we have
\allowdisplaybreaks
\begin{align*}
&\eqsp\E\left[V_{t+1}^S((H_t^0,\Ga_t,O_{t+1}^0),H_{t+1}^{1:N})|h_t^0,h_1,\ga\right]\\
&=\sum_{o_{t+1}^{0:N}}\P(o_{t+1}^{0:N}|h_t^0,h_1,\ga)V_{t+1}^S((h_t^0,\ga,o_{t+1}^0),(h_1,a,o_{t+1}^{1:N}))\\
&=\sum_{o_{t+1}^{0:N}}\sum_{s_{t+1}}\P(o_{t+1}^{0:N},s_{t+1}|h_t^0,h_1,\ga)V_{t+1}^S((h_t^0,\ga,o_{t+1}^0),(h_1,a,o_{t+1}^{1:N}))\\
&=\sum_{o_{t+1}^{0:N}}\sum_{s_{t+1}}\P(o_{t+1}^{0:N}|h_t^0,h_1,\ga,s_{t+1})\cdot\P(s_{t+1}|h_t^0,h_1,\ga)\cdot V_{t+1}^S((h_t^0,\ga,o_{t+1}^0),(h_1,a,o_{t+1}^{1:N}))\\
&=\sum_{o_{t+1}^{0:N}}\sum_{s_{t+1}}\P(o_{t+1}^{0:N}|s_{t+1})\cdot\P(s_{t+1}|h_t^0,h_1,\ga)\cdot V_{t+1}^S((h_t^0,\ga,o_{t+1}^0),(h_1,a,o_{t+1}^{1:N}))\tag{$\P_O$ specifies $O_{t+1}^{0:N}$ given $S_{t+1}$}\\
&=\sum_{o_{t+1}^{0:N}}\sum_{s_{t+1}}\P(o_{t+1}^{0:N}|s_{t+1})\cdot\P(s_{t+1}|h_t^0,h_1,a)\cdot V_{t+1}^S((h_t^0,\ga,o_{t+1}^0),(h_1,a,o_{t+1}^{1:N}))\tag{\pref{lem:gamma-to-a}}\\
&=\sum_{o_{t+1}^{0:N}}\P(o_{t+1}^{0:N}|h_t^0,h_1,a)V_{t+1}^S((h_t^0,\ga,o_{t+1}^0),(h_1,a,o_{t+1}^{1:N})),
\end{align*}
and the same equality holds for $h_2$. Therefore,
\allowdisplaybreaks
\begin{small}
\begin{align*}
&\eqsp\left|\E\left[V_{t+1}^S((H_t^0,\Ga_t,O_{t+1}^0),H_{t+1}^{1:N})|h_t^0,h_1,\ga\right]-\E\left[V_{t+1}^S((H_t^0,\Ga_t,O_{t+1}^0),H_{t+1}^{1:N})|h_t^0,h_2,\ga\right]\right|\\
&=\left|\sum_{o_{t+1}^{0:N}}\P(o_{t+1}^{0:N}|h_t^0,h_1,a)V_{t+1}^S((h_t^0,\ga,o_{t+1}^0),(h_1,a,o_{t+1}^{1:N}))-\sum_{o_{t+1}^{0:N}}\P(o_{t+1}^{0:N}|h_t^0,h_2,a)V_{t+1}^S((h_t^0,\ga,o_{t+1}^0),(h_2,a,o_{t+1}^{1:N}))\right|\\
&\leq\left|\sum_{o_{t+1}^{0:N}}\P(o_{t+1}^{0:N}|h_t^0,h_1,a)V_{t+1}^S((h_t^0,\ga,o_{t+1}^0),(h_1,a,o_{t+1}^{1:N}))-\sum_{o_{t+1}^{0:N}}\P(o_{t+1}^{0:N}|h_t^0,\zh,a)V_{t+1}^S((h_t^0,\ga,o_{t+1}^0),(h_1,a,o_{t+1}^{1:N}))\right|\\
&\es+\left|\sum_{o_{t+1}^{0:N}}\P(o_{t+1}^{0:N}|h_t^0,\zh,a)V_{t+1}^S((h_t^0,\ga,o_{t+1}^0),(h_1,a,o_{t+1}^{1:N}))-\sum_{o_{t+1}^{0:N}}\P(o_{t+1}^{0:N}|h_t^0,h_2,a)V_{t+1}^S((h_t^0,\ga,o_{t+1}^0),(h_1,a,o_{t+1}^{1:N}))\right|\\
&\es+\left|\sum_{o_{t+1}^{0:N}}\P(o_{t+1}^{0:N}|h_t^0,h_2,a)V_{t+1}^S((h_t^0,\ga,o_{t+1}^0),(h_1,a,o_{t+1}^{1:N}))-\sum_{o_{t+1}^{0:N}}\P(o_{t+1}^{0:N}|h_t^0,h_2,a)V_{t+1}^S((h_t^0,\ga,o_{t+1}^0),(h_2,a,o_{t+1}^{1:N}))\right|\\
&:=\textcircled{1}+\textcircled{2}+\textcircled{3}.
\end{align*}
\end{small}

For the first two terms, we have
\[
\textcircled{1},\textcircled{2}\leq2\|V_{t+1}\|_\infty\cdot\dl_p/8\leq T\Rb\dl_p/4
\]
by (ASPS3). Note that the above equation follows if $\Kc(\cdot,\cdot)$ is the total variation distance. If it is instead the Wasserstein metric, then the total variation distance will still be bounded by $\dl_p\big/\min_{x,y\in\Om(O_{t+1}^{0:N}),x\neq y}\|x-y\|$; we can redefine this value as $\dl_p$ so that the total variation distance is still bounded by $\dl_p$.

Now consider a fixed realization of $o_{t+1}^{0:N}$. We have
\begin{align*}
&\eqsp\vthh_{t+1}^{1:N}((h_t^0,\ga,o_{t+1}^0),(h_1,a,o_{t+1}^{1:N}))\\
&=\phih_{t+1}^{1:N}(\vthh_t^{1:N}(h_1),h_t^0,\ga,o_{t+1}^{0:N})\tag{(ASPS1)}\\
&=\phih_{t+1}^{1:N}(\vthh_t^{1:N}(h_2),h_t^0,\ga,o_{t+1}^{0:N})\tag{assumption}\\
&=\vthh_{t+1}^{1:N}((h_t^0,\ga,o_{t+1}^0),(h_2,a,o_{t+1}^{1:N}))\tag{(ASPS1)},
\end{align*}
so that under the public FCS $(h_t^0,\ga,o_{t+1}^0)$, the two FPSs $(h_1,a,o_{t+1}^{1:N})$ and $(h_2,a,o_{t+1}^{1:N})$ will be mapped to the same ASPS as well. Hence, by the induction hypothesis of \pref{lem:two-ASPS} which leads to \pref{cor:ASPS} at the $t+1$ step, we obtain
\begin{align*}
&\eqsp\left|V_{t+1}^S((h_t^0,\ga,o_{t+1}^0),(h_1,a,o_{t+1}^{1:N}))-V_{t+1}^S((h_t^0,\ga,o_{t+1}^0),(h_2,a,o_{t+1}^{1:N}))\right|\\
&\leq(T-t-1)(\ep_p+T\Rb\dl_p)/2+\ep_p/2.
\end{align*}
The last term can thus be bounded by
\begin{align*}
\textcircled{3}&\leq\sum_{o_{t+1}^{0:N}}\P(o_{t+1}^{0:N}|h_t^0,h_2,a)\left|V_{t+1}^S((h_t^0,\ga,o_{t+1}^0),(h_1,a,o_{t+1}^{1:N}))-V_{t+1}^S((h_t^0,\ga,o_{t+1}^0),(h_2,a,o_{t+1}^{1:N}))\right|\\
&\leq\sum_{o_{t+1}^{0:N}}\P(o_{t+1}^{0:N}|h_t^0,h_2,a)[(T-t-1)(\ep_p+T\Rb\dl_p)/2+\ep_p/2]\\
&=(T-t-1)(\ep_p+T\Rb\dl_p)/2+\ep_p/2.
\end{align*}
Combining the three terms plus the instantaneous part, it follows that
\begin{align*}
\left|Q_t^S(h_t^0,h_1,\ga)-Q_t^S(h_t^0,h_2,\ga)\right|&\leq\ep_p/2+2\cdot T\Rb\dl_p/4+(T-t-1)(\ep_p+T\Rb\dl_p)/2+\ep_p/2\\
&=(T-t)(\ep_p+T\Rb\dl_p)/2+\ep_p/2.
\end{align*}
\end{proof}

\begin{proof}[Proof of \pref{cor:ASPS}]
Assume the optimal prescription $\ga^*$ prescribes different actions on the two FPSs $h_1,h_2\in\Om(H_t^{1:N})$, so that $\ga^*(h_1)=a_1$ and $\ga^*(h_2)=a_2$ where $a_1\neq a_2$; otherwise, the claim directly follows by \pref{lem:two-ASPS}. Also, define $\ga^{\prime},\ga^{\prime\prime}\in\Om(\Ga_t)$ by
\[
\ga^{\prime}(h)=\left\{
\begin{array}{ll}
a_1&\text{if }h=h_2,\\
\ga^*(h)&\text{otherwise,}
\end{array}\right.
\qquad
\ga^{\prime\prime}(h)=\left\{
\begin{array}{ll}
a_2&\text{if }h=h_1,\\
\ga^*(h)&\text{otherwise.}
\end{array}\right.
\]
Let $\upsilon_1=Q_t^S(h_t^0,h_1,\ga^*)$ and $\upsilon_2=Q_t^S(h_t^0,h_2,\ga^*)$. Denote $B_t\triangleq(T-t)(\ep_p+T\Rb\dl_p)/2+\ep_p/2$ for simplicity. From \eqref{eq:supervisor-Q-expansion}, $Q_t(h_t,\ga^*)$ can be expanded as
\[
Q_t(h_t^0,\ga^*)=\sum_{h\neq h_1,h_2}\P(h|h_t^0)Q_t^S(h_t^0,h,\ga^*)+\P(h_1|h_t^0)\upsilon_1+\P(h_2|h_t^0)\upsilon_2.
\]
Likewise, we can also expand $Q_t(h_t,\ga^{\prime})$ to
\begin{align*}
&\eqsp Q_t(h_t^0,\ga^{\prime})\\
&=\sum_{h\neq h_1,h_2}\P(h|h_t^0)Q_t^S(h_t^0,h,\ga^{\prime})+\P(h_1|h_t^0)Q_t^S(h_t^0,h_1,\ga^{\prime})+\P(h_2|h_t^0)Q_t^S(h_t^0,h_2,\ga^{\prime})\\
&\geq\sum_{h\neq h_1,h_2}\P(h|h_t^0)Q_t^S(h_t^0,h,\ga^{\prime})+\P(h_1|h_t^0)Q_t^S(h_t^0,h_1,\ga^{\prime})+\P(h_2|h_t^0)\left[Q_t^S(h_t^0,h_1,\ga^{\prime})-B_t\right]\tag{\pref{lem:two-ASPS}}\\
&=\sum_{h\neq h_1,h_2}\P(h|h_t^0)Q_t^S(h_t^0,h,\ga^*)+\P(h_1|h_t^0)Q_t^S(h_t^0,h_1,\ga^*)+\P(h_2|h_t^0)\left[Q_t^S(h_t^0,h_1,\ga^*)-B_t\right]\tag{\pref{lem:two-prescription}}\\
&=\sum_{h\neq h_1,h_2}\P(h|h_t^0)Q_t^S(h_t^0,h,\ga^*)+\P(h_1|h_t^0)\upsilon_1+\P(h_2|h_t^0)(\upsilon_1-B_t);
\end{align*}
by symmetry
\[
Q_t(h_t^0,\ga^{\prime\prime})\geq\sum_{h\neq h_1,h_2}\P(h|h_t^0)Q_t^S(h_t^0,h,\ga^*)+\P(h_1|h_t^0)(\upsilon_2-B_t)+\P(h_2|h_t^0)\upsilon_2.
\]
We have
\allowdisplaybreaks
\begin{align*}
&\eqsp\sum_{h\neq h_1,h_2}\P(h|h_t^0)Q_t^S(h_t^0,h,\ga^*)+\P(h_1|h_t^0)\upsilon_1+\P(h_2|h_t^0)(\upsilon_1-B_t)\leq Q_t(h_t^0,\ga^{\prime})\\
&\leq Q_t(h_t^0,\ga^*)=\sum_{h\neq h_1,h_2}\P(h|h_t^0)Q_t^S(h_t^0,h,\ga^*)+\P(h_1|h_t^0)\upsilon_1+\P(h_2|h_t^0)\upsilon_2
\end{align*}
and
\begin{align*}
&\eqsp\sum_{h\neq h_1,h_2}\P(h|h_t^0)Q_t^S(h_t^0,h,\ga^*)+\P(h_1|h_t^0)(\upsilon_2-B_t)+\P(h_2|h_t^0)\upsilon_2\leq Q_t(h_t^0,\ga^{\prime\prime})\\
&\leq Q_t(h_t^0,\ga^*)=\sum_{h\neq h_1,h_2}\P(h|h_t^0)Q_t^S(h_t^0,h,\ga^*)+\P(h_1|h_t^0)\upsilon_1+\P(h_2|h_t^0)\upsilon_2.
\end{align*}
Canceling and rearranging the terms yield
\[
-B_t\leq\upsilon_1-\upsilon_2\leq B_t.
\]
\end{proof}

\begin{proof}[Proof of \pref{thm:ASPS}]
We prove the result by induction. The base case trivially follows from \pref{prop:ASPS-main}. Note that the continuation values at $T+1$ are defined to be $0$, i.e. $V_{T+1}(h_{T+1}^0)\triangleq0$ and $\Vh_{T+1}(h_{T+1}^0)\triangleq0$ for any $h_{T+1}^0\in\Om(H_{T+1}^0)$. Hence, for any $h_T^0\in\Om(H_T^0)$ and $\ga^*\in\Om(\Ga_t)$, we have
\[
Q_T(h_T^0,\ga^*)-\ep_p=V_T(h_T^0)-\ep_p\leq\max_{\lmh\in\Om(\Lmh_T)}Q_T(h_T^0,\ga_{\lmh,h_T^0})=\max_{\lmh\in\Om(\Lmh_T)}\Qh_T(h_T^0,\lmh)=\Vh_T(h_T^0).
\]
In the equation, $Q_T(h_T^0,\ga_{\lmh,h_T^0})=\Qh_T(h_T^0,\lmh)$ because there is no continuation value for $T$. Now for the induction step, we assume the induction hypothesis, i.e. the claim holds for some $t+1\leq T$ so that we have for any $h_{t+1}^0\in\Om(H_{t+1}^0)$,
\[
V_{t+1}(h_{t+1}^0)-\Vh_{t+1}(h_{t+1}^0)\leq\frac{(T-t-1)(T-t)}{2}(\ep_p+T\Rb\dl_p)+(T-t)\ep_p.
\]
\pref{prop:ASPS-main} states that for any $h_t^0\in\Om(H_t^0)$ and optimal prescription $\ga^*\in\argmax_\ga Q_t(h_t^0,\ga)$, there exists a $\lmh\in\Om(\Lmh_t)$ such that
\[
Q_t(h_t^0,\ga^*)-Q_t(h_t^0,\ga_{\lmh,h_t^0})\leq(T-t)(\ep_p+T\Rb\dl_p)+\ep_p.
\]
Write $\Cf_t\triangleq(T-t)(\ep_p+T\Rb\dl_p)+\ep_p$ for shorthand of notation. Then for this $\lmh$, we have
\bals
&\eqsp Q_t(h_t^0,\ga^*)-\Qh_t(h_t^0,\lmh)
=Q_t(h_t^0,\ga^*)-Q_t(h_t^0,\ga_{\lmh,h_t^0})+Q_t(h_t^0,\ga_{\lmh,h_t^0})-\Qh_t(h_t^0,\lmh)\\
&\leq\Cf_t+\E[R_t+V_{t+1}(H_{t+1}^0)|h_t^0,\ga_{\lmh,h_t^0}]-\E[R_t+\Vh_{t+1}(H_{t+1}^0)|h_t^0,\ga_{\lmh,h_t^0}]\\
&=\Cf_t+\sum_{h_{t+1}^0}\P(h_{t+1}^0|h_t^0,\ga_{\lmh,h_t^0})\left[V_{t+1}(h_{t+1}^0)-\Vh_{t+1}(h_{t+1}^0)\right]\\
&\leq\Cf_t+\sum_{h_{t+1}^0}\P(h_{t+1}^0|h_t^0,\ga_{\lmh,h_t^0})\left[\frac{(T-t-1)(T-t)}{2}(\ep_p+T\Rb\dl_p)+(T-t)\ep_p\right]\\
&=(T-t)(\ep_p+T\Rb\dl_p)+\ep_p+\frac{(T-t-1)(T-t)}{2}(\ep_p+T\Rb\dl_p)+(T-t)\ep_p\\
&=\frac{(T-t)(T-t+1)}{2}(\ep_p+T\Rb\dl_p)+(T-t+1)\ep_p.
\eals
\end{proof}
\section{Omitted Analysis in \pref{sec:ASCS}}\label{app:ASCS}

\begin{prop}\label{prop:ASCS-main}
Assume the reward function $R$ is uniformly bounded by $\Rb$. Let $h_1^0,h_2^0\in\Om(H_t^0)$ be two FCSs. If $\zh^0=\vthh_t^0(h_1^0)=\vthh_t^0(h_2^0)$, then for any $\lmh\in\Om(\Lmh_t)$,
\beq
\left|\Qh_t(h_1^0,\lmh)-\Qh_t(h_2^0,\lmh)\right|\leq2(T-t)(\ep_c+T\Rb\dl_c)+2\ep_c.
\eeq
\end{prop}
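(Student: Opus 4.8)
The plan is to prove the claim by backward induction on $t$, running from $t=T$ down to the time of interest. The key structural observation is that a bound on the $\Qh$-difference transfers immediately to a bound on the $\Vh$-difference through the elementary inequality $|\max_{\lmh}f(\lmh)-\max_{\lmh}g(\lmh)|\leq\max_{\lmh}|f(\lmh)-g(\lmh)|$; since $\Vh_t(h^0)=\max_{\lmh}\Qh_t(h^0,\lmh)$, any two FCSs sharing the same ASCS inherit the same bound on their $\Vh_t$-values as on their $\Qh_t$-values. I will therefore carry the induction on the $\Qh$-statement and use its $\Vh$-consequence inside the recursion.

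For the base case $t=T$, the continuation value $\Vh_{T+1}\equiv0$, so $\Qh_T(h^0,\lmh)=\E[R_T\,|\,h^0,\lmh]$. Applying (ASCS2) to each of $h_1^0$ and $h_2^0$ (both of which map to $\zh^0$) and using the triangle inequality gives $|\Qh_T(h_1^0,\lmh)-\Qh_T(h_2^0,\lmh)|\leq2\ep_c$, which is exactly the claimed bound at $T-t=0$. For the inductive step I expand $\Qh_t(h^0,\lmh)=\E[R_t\,|\,h^0,\lmh]+\sum_{o}\P(o\,|\,h^0,\lmh)\,\Vh_{t+1}((h^0,\lmh,o))$, where $o$ ranges over $\Om(O_{t+1}^0)$. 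The instantaneous part of the difference is again bounded by $2\ep_c$ via (ASCS2). For the continuation part I insert the intermediate quantity $\sum_{o}\P(o\,|\,h_1^0,\lmh)\,\Vh_{t+1}((h_2^0,\lmh,o))$, splitting the continuation difference into the term (I) $\sum_{o}\P(o\,|\,h_1^0,\lmh)\big[\Vh_{t+1}((h_1^0,\lmh,o))-\Vh_{t+1}((h_2^0,\lmh,o))\big]$ and the term (II) $\sum_{o}\big[\P(o\,|\,h_1^0,\lmh)-\P(o\,|\,h_2^0,\lmh)\big]\Vh_{t+1}((h_2^0,\lmh,o))$.

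The crucial point for (I) is that appending the same prescription and observation preserves ASCS-equivalence: by (ASCS1), $\vthh_{t+1}^0((h_i^0,\lmh,o))=\phih_t^0(\vthh_t^0(h_i^0),\lmh,o)=\phih_t^0(\zh^0,\lmh,o)$ for $i=1,2$, so $(h_1^0,\lmh,o)$ and $(h_2^0,\lmh,o)$ share the same ASCS at time $t+1$. The $\Vh$-form of the induction hypothesis then bounds each bracket by $2(T-t-1)(\ep_c+T\Rb\dl_c)+2\ep_c$, and since $\sum_o\P(o\,|\,h_1^0,\lmh)=1$ this bound survives the averaging. For (II) I apply (ASCS3) to $h_1^0$ and to $h_2^0$ separately and combine by the triangle inequality for $\Kc$ to obtain $\Kc\big(\P(\cdot\,|\,h_1^0,\lmh),\P(\cdot\,|\,h_2^0,\lmh)\big)\leq\dl_c$; with $\|\Vh_{t+1}\|_\infty\leq T\Rb$ and the total-variation estimate $|\sum_o(P-Q)(o)f(o)|\leq2\|f\|_\infty\Kc(P,Q)$ (treating the Wasserstein case by the same rescaling of $\dl_c$ used in the proof of \pref{lem:two-ASPS}), the term (II) is bounded in absolute value by $2T\Rb\dl_c$. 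Adding the instantaneous $2\ep_c$, the bound $2(T-t-1)(\ep_c+T\Rb\dl_c)+2\ep_c$ from (I), and the $2T\Rb\dl_c$ from (II), and then expanding the target $2(T-t)(\ep_c+T\Rb\dl_c)+2\ep_c$, one checks that the two expressions coincide, closing the induction.

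The step that needs the most care is the continuation decomposition: verifying via (ASCS1) that ASCS-equivalence at $t$ propagates to $t+1$ under appending $(\lmh,o)$ — this is precisely what makes the induction hypothesis applicable to term (I) — and keeping track of the factors of $2$ that arise from the double triangle-inequality use of (ASCS2) and (ASCS3). It is also worth confirming that the $\Vh$-form of the hypothesis is genuinely available at each level, which it is because the max-inequality transfers the $\Qh$ bound to $\Vh$ without any additional loss.
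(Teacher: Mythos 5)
Your proof is correct and follows essentially the same route as the paper's: backward induction with the instantaneous part handled by (ASCS2) through the common $\zh^0$, the continuation split into a probability-difference term controlled by (ASCS3) (worth $2T\Rb\dl_c$) and a value-difference term controlled by (ASCS1) plus the induction hypothesis; your two-term telescoping is just a cosmetic regrouping of the paper's three-term split, and the constants match. Your explicit remark that the $\Qh$-bound transfers to $\Vh$ via $|\max f-\max g|\leq\max|f-g|$ makes precise a step the paper leaves implicit, but it is not a different argument.
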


\begin{proof}
We proceed the proof again by mathematical induction. The instantaneous part as well as the base case $t=T$ trivially follow from (ASCS2)
\begin{align*}
&\eqsp\left|\E\left[R_t(S_t,A_t)|h_1^0,\lmh\right]-\E\left[R_t(S_t,A_t)|h_2^0,\lmh\right]\right|\\
&\leq\left|\E\left[R_t(S_t,A_t)|h_1^0,\lmh\right]-\E\left[R_t(S_t,A_t)|\zh^0,\lmh\right]\right|+\left|\E\left[R_t(S_t,A_t)|\zh^0,\lmh\right]-\E\left[R_t(S_t,A_t)|h_2^0,\lmh\right]\right|\\
&\leq\ep_c+\ep_c\tag{(ASCS2)}
=2\ep_c.
\end{align*}
For the continuation part in the induction step, we have
\allowdisplaybreaks
\begin{align*}
&\eqsp\left|\E\left[\Vh_{t+1}((H_t^0,\Lmh_t,O_{t+1}^0))|h_1^0,\lmh\right]-\E\left[\Vh_{t+1}((H_t^0,\Lmh_t,O_{t+1}^0))|h_2^0,\lmh\right]\right|\\
&=\left|\sum_{o_{t+1}^0}\P(o_{t+1}^0|h_1^0,\lmh)\Vh_{t+1}((h_1^0,\lmh,o_{t+1}^0))-\sum_{o_{t+1}^0}\P(o_{t+1}^0|h_2^0,\lmh)\Vh_{t+1}((h_2^0,\lmh,o_{t+1}^0))\right|\\
&\leq\left|\sum_{o_{t+1}^0}\P(o_{t+1}^0|h_1^0,\lmh)\Vh_{t+1}((h_1^0,\lmh,o_{t+1}^0))-\sum_{o_{t+1}^0}\P(o_{t+1}^0|\zh^0,\lmh)\Vh_{t+1}((h_1^0,\lmh,o_{t+1}^0))\right|\\
&\qquad+\left|\sum_{o_{t+1}^0}\P(o_{t+1}^0|\zh^0,\lmh)\Vh_{t+1}((h_1^0,\lmh,o_{t+1}^0))-\sum_{o_{t+1}^0}\P(o_{t+1}^0|h_2^0,\lmh)\Vh_{t+1}((h_1^0,\lmh,o_{t+1}^0))\right|\\
&\qquad+\left|\sum_{o_{t+1}^0}\P(o_{t+1}^0|h_2^0,\lmh)\Vh_{t+1}((h_1^0,\lmh,o_{t+1}^0))-\sum_{o_{t+1}^0}\P(o_{t+1}^0|h_2^0,\lmh)\Vh_{t+1}((h_2^0,\lmh,o_{t+1}^0))\right|\\
&:=\textcircled{1}+\textcircled{2}+\textcircled{3}.
\end{align*}
For the first two terms, we have
\[
\textcircled{1},\textcircled{2}\leq2\|\Vh_{t+1}\|_\infty\cdot\dl_c/2\leq T\Rb\dl_c
\]
by (ASCS3).

Now consider a fixed realization of $o_{t+1}^0$. We have
\begin{align*}
&\eqsp\vthh_{t+1}^0((h_1^0,\lmh,o_{t+1}^0))\\
&=\phih_{t+1}^0(\vthh_t^0(h_1),\lmh,o_{t+1}^0)\tag{(ASCS1)}\\
&=\phih_{t+1}^0(\vthh_t^0(h_2),\lmh,o_{t+1}^0)\tag{assumption}\\
&=\vthh_{t+1}^0((h_2^0,\lmh,o_{t+1}^0))\tag{(ASCS1)},
\end{align*}
so that the two FCSs (with ASPS-based prescription) $(h_1^0,\lmh,o_{t+1}^0)$ and $(h_2^0,\lmh,o_{t+1}^0)$ will be mapped to the same ASCS as well. Hence, by the induction hypothesis, we obtain
\[
\left|\Vh_{t+1}((h_1^0,\lmh,o_{t+1}^0))-\Vh_{t+1}((h_2^0,\lmh,o_{t+1}^0))\right|\leq2(T-t-1)(\ep_c+T\Rb\dl_c)+2\ep_c.
\]
The last term can thus be bounded by
\begin{align*}
\textcircled{3}
&\leq\sum_{o_{t+1}^0}\P(o_{t+1}^0|h_2^0,\lmh)\left|\Vh_{t+1}((h_1^0,\lmh,o_{t+1}^0))-\Vh_{t+1}((h_2^0,\lmh,o_{t+1}^0))\right|\\
&\leq\sum_{o_{t+1}^0}\P(o_{t+1}^0|h_2^0,\lmh)[2(T-t-1)(\ep_c+T\Rb\dl_c)+2\ep_c]
=2(T-t-1)(\ep_c+T\Rb\dl_c)+2\ep_c.
\end{align*}
Combining the three terms plus the instantaneous part, it follows that
\begin{align*}
\left|\Qh_t(h_1^0,\lmh)-\Qh_t(h_2^0,\lmh)\right|&\leq2\ep_c+2\cdot T\Rb\dl_c+2(T-t-1)(\ep_c+T\Rb\dl_c)+2\ep_c\\
&=2(T-t)(\ep_c+T\Rb\dl_c)+2\ep_c.
\end{align*}
\end{proof}

\begin{proof}[Proof of \pref{thm:ASCS}]
We proceed the proof again by mathematical induction. The base case $t=T$ trivially follows from (ASCS2). For the induction step, we have
\begin{align*}
&\eqsp\Qh_t(h_t^0,\lmh)-\widecheck{Q}_t(\vthh_t^0(h_t^0),\lmh)\\
&=\E\left[R_t(S_t,A_t)|h_t^0,\lmh\right]-\E\left[R_t(S_t,A_t)|\vthh_t^0(h_t^0),\lmh\right]\\
&\qquad+\E\left[\Vh_{t+1}(H_{t+1}^0)|h_t^0,\lmh\right]-\E\left[\widecheck{V}_{t+1}(\vthh_{t+1}^0(H_{t+1}^0))|\vthh_t^0(h_t^0),\lmh\right]\\
&\leq\ep_c+\E\left[\Vh_{t+1}(H_{t+1}^0)|h_t^0,\lmh\right]-\E\left[\widecheck{V}_{t+1}(\vthh_{t+1}^0(H_{t+1}^0))|\vthh_t^0(h_t^0),\lmh\right]\tag{(ASCS2)}\\
&=\ep_c+\E\left[\Vh_{t+1}(H_{t+1}^0)|h_t^0,\lmh\right]-\E\left[\Vh_{t+1}(H_{t+1}^0)|\vthh_t^0(h_t^0),\lmh\right]\\
&\qquad+\E\left[\Vh_{t+1}(H_{t+1}^0)|\vthh_t^0(h_t^0),\lmh\right]-\E\left[\widecheck{V}_{t+1}(\vthh_{t+1}^0(H_{t+1}^0))|\vthh_t^0(h_t^0),\lmh\right]\\
&=\ep_c+\sum_{h_{t+1}^0}\left[\P(h_{t+1}^0|h_t^0,\lmh)-\P(h_{t+1}^0|\vthh_t^0(h_t^0),\lmh)\right]\Vh_{t+1}(h_{t+1}^0)\\
&\qquad+\sum_{h_{t+1}^0}\P(h_{t+1}^0|\vthh_t^0(h_t^0),\lmh)\left[\Vh_{t+1}(h_{t+1}^0)-\widecheck{V}_{t+1}(\vthh_{t+1}^0(h_{t+1}^0))\right]\\
&:=\ep_c+\textcircled{1}+\textcircled{2}.
\end{align*}
The first term is bounded by (ASCS3)
\begin{align*}
\textcircled{1}&=\sum_{o_{t+1}^0}\left[\P(o_{t+1}^0|h_t^0,\lmh)-\P(o_{t+1}^0|\vthh_t^0(h_t^0),\lmh)\right]\Vh_{t+1}((h_t^0,\lmh,o_{t+1}^0))\\
&\leq2\|\Vh_{t+1}\|_\infty\cdot\dl_c/2\leq T\Rb\dl_c,
\end{align*}
while the second term can be bounded by the induction hypothesis
\allowdisplaybreaks
\begin{align*}
\textcircled{2}&\leq\sum_{h_{t+1}^0}\P(h_{t+1}^0|\vthh_t^0(h_t^0),\lmh)\left[(T-t-1)(\ep_c+T\Rb\dl_c)+\ep_c\right]\\
&=(T-t-1)(\ep_c+T\Rb\dl_c)+\ep_c.
\end{align*}
Combining the terms, it follows that
\begin{align*}
\Qh_t(h_t^0,\lmh)-\widecheck{Q}_t(\vthh_t^0(h_t^0),\lmh)&\leq\ep_c+T\Rb\dl_c+(T-t-1)(\ep_c+T\Rb\dl_c)+\ep_c\\
&=(T-t)(\ep_c+T\Rb\dl_c)+\ep_c.
\end{align*}
The $V$ part of the claim can be obtained by considering an optimal prescription $\lmh^*\in\argmax_{\lmh\in\Om(\Lmh_t)}\Qh_t(h_t^0,\lmh)$ in the $Q$ part.
\end{proof}

\section{Omitted Analysis in \pref{sec:comparison}}\label{app:comparison}

As mentioned in \pref{sec:comparison}, when considering $\ep_c=\dl_c=\ep_p=\dl_p=0$, we use SCS and SPS to refer to the common and private representations. Moreover, we use $Z$ and $\vth$ to denote the compressed state and the compression mapping when the error parameters are $0$, instead of $\Zh$ and $\vthh$.

\begin{proof}[Proof of \pref{prop:BCS=SCS}]
For (SCS1), BCSs can be updated recursively through Bayesian updates \citep{Demos_CI_2013}. For (SCS2), notice that
\[
\E[R_t(S_t,A_t)|h_t^0,\lm_t]=\sum_{s_t,h_t^{1:N}}\P(s_t,h_t^{1:N}|h_t^0)R(s_t,\ga_t(h_t^{1:N})),
\]
and the ensemble of $\P(s_t,h_t^{1:N}|h_t^0)$ through their spaces is exactly $\Pi_t(h_t^0)=\P(S_t,H_t^{1:N}|h_t^0)$. Similarly, it satisfies (SCS3) as well, since
\[
\P(O_{t+1}^0|h_t^0,\ga_t)=\sum_{s_t,h_t^{1:N}}\P(s_t,h_t^{1:N}|h_t^0)\cdot\sum_{s_{t+1}}\P(s_{t+1}|s_t,\ga_t(h_t^{1:N}))\cdot\P(O_{t+1}^0|s_{t+1}).
\]
The quantity $\Pi_t(h_t^0)=\P(S_t,H_t^{1:N}|h_t^0)$ again exactly encapsulates what is needed to compute $\P(O_{t+1}^0|h_t^0,\ga_t)$.
\end{proof}

\begin{proof}[Proof of \pref{prop:SPI-relation-1}]
\allowdisplaybreaks
\begin{align*}
&\eqsp\P(z_{t+1}^{1:N},o_{t+1}^0|h_t^0,h_t^{1:N},\ga_t,a_t)
=\sum_{o_{t+1}^{0:N}}\sum_{s_{t+1}}\P(z_{t+1}^{1:N},o_{t+1}^{0:N},s_{t+1}|h_t^0,h_t^{1:N},\ga_t,a_t)\\
&=\sum_{o_{t+1}^{0:N}}\sum_{s_{t+1}}\P(s_{t+1}|h_t^0,h_t^{1:N},\ga_t,a_t)\cdot\P(o_{t+1}^{0:N}|h_t^0,h_t^{1:N},\ga_t,a_t,s_{t+1})\cdot\P(z_{t+1}^{1:N}|h_t^0,h_t^{1:N},\ga_t,a_t,s_{t+1},o_{t+1}^{0:N})\\
&=\sum_{o_{t+1}^{0:N}}\sum_{s_{t+1}}\P(s_{t+1}|h_t^0,h_t^{1:N},\ga_t)\cdot\P(o_{t+1}^{0:N}|s_{t+1})\cdot\P(z_{t+1}^{1:N}|h_t^0,h_t^{1:N},\ga_t,s_{t+1},o_{t+1}^{0:N})\tag{redundancy of $a_t$ and $P_O$ specifies $O_{t+1}$ given $S_{t+1}$}\\
&=\sum_{o_{t+1}^{0:N}}\sum_{s_{t+1}}\P(s_{t+1}|h_t^0,h_t^{1:N},a_t)\cdot\P(o_{t+1}^{0:N}|s_{t+1})\cdot\P(z_{t+1}^{1:N}|h_t^0,h_t^{1:N},\ga_t,s_{t+1},o_{t+1}^{0:N})\tag{\pref{lem:gamma-to-a}}\\
&=\sum_{o_{t+1}^{0:N}}\sum_{s_{t+1}}\P(s_{t+1},o_{t+1}^{0:N}|h_t^0,h_t^{1:N},a_t)\cdot\I\{z_{t+1}^{1:N}=\phi_{t+1}^{1:N}(\vth_t(h_t^0,h_t^{1:N}),\ga_t,o_{t+1}^{0:N})\}\tag{(SPS1)}\\
&=\sum_{o_{t+1}^{0:N}}\P(o_{t+1}^{0:N}|h_t^0,h_t^{1:N},a_t)\cdot\I\{z_{t+1}^{1:N}=\phi_{t+1}^{1:N}(z_t^{1:N},\ga_t,o_{t+1}^{0:N})\}\\
&=\sum_{o_{t+1}^{0:N}}\P(o_{t+1}^{0:N}|h_t^0,z_t^{1:N},a_t)\cdot\I\{z_{t+1}^{1:N}=\phi_{t+1}^{1:N}(z_t^{1:N},\ga_t,o_{t+1}^{0:N})\}\tag{(SPS3)}\\
&=\P(z_{t+1}^{1:N},o_{t+1}^0|h_t^0,z_t^{1:N},\ga_t,a_t).
\end{align*}
Note the last equality follows as in (SPS3) it is implicitly assumed that $z_t^{1:N}=\vth_t(h_t^0,h_t^{1:N})$.
\end{proof}

\begin{proof}[Proof of \pref{prop:SPI-relation-2}]
\allowdisplaybreaks
\begin{align*}
&\eqsp\E[R(S_t,A_t)|h_t^0,h_t^n,a_t]=\sum_{s_t}R(s_t,a_t)\P(s_t|h_t^0,h_t^n)
=\sum_{s_t}R(s_t,a_t)\sum_{z_t^{-n}}\P(s_t,z_t^{-n}|h_t^0,h_t^n)\\
&=\sum_{s_t}R(s_t,a_t)\sum_{z_t^{-n}}\P(z_t^{-n}|h_t^0,h_t^n)\cdot\P(s_t|h_t^0,h_t^n,z_t^{-n})\\
&=\sum_{s_t}R(s_t,a_t)\sum_{z_t^{-n}}\P(z_t^{-n}|h_t^0,h_t^n)\sum_{h_t^{-n}}\P(s_t,h_t^{-n}|h_t^0,h_t^n,z_t^{-n})\\
&=\sum_{s_t}R(s_t,a_t)\sum_{z_t^{-n}}\P(z_t^{-n}|h_t^0,h_t^n)\sum_{h_t^{-n}}\P(h_t^{-n}|h_t^0,h_t^n,z_t^{-n})\cdot\P(s_t|h_t^0,h_t^n,z_t^{-n},h_t^{-n})\\
&=\sum_{s_t}R(s_t,a_t)\sum_{z_t^{-n}}\P(z_t^{-n}|h_t^0,h_t^n)\sum_{h_t^{-n}}\P(h_t^{-n}|h_t^0,h_t^n,z_t^{-n})\cdot\P(s_t|h_t^0,h_t^n,h_t^{-n})\tag{$z_t^{-n}=\vth_t^{-n}(h_t^0,h_t^{-n})$}\\
&=\sum_{z_t^{-n}}\P(z_t^{-n}|h_t^0,h_t^n)\sum_{h_t^{-n}}\P(h_t^{-n}|h_t^0,h_t^n,z_t^{-n})\sum_{s_t}R(s_t,a_t)\P(s_t|h_t^0,h_t^n,h_t^{-n})\\
&=\sum_{z_t^{-n}}\P(z_t^{-n}|h_t^0,h_t^n)\sum_{h_t^{-n}}\P(h_t^{-n}|h_t^0,h_t^n,z_t^{-n})\E[R(S_t,A_t)|h_t^0,h_t^{1:N},a_t]\\
&=\sum_{z_t^{-n}}\P(z_t^{-n}|h_t^0,h_t^n)\sum_{h_t^{-n}}\P(h_t^{-n}|h_t^0,h_t^n,z_t^{-n})
\E[R(S_t,A_t)|h_t^0,z_t^{1:N},a_t]\tag{(SPS2))}\\
&=\sum_{z_t^{-n}}\P(z_t^{-n}|h_t^0,h_t^n)\sum_{h_t^{-n}}\P(h_t^{-n}|h_t^0,h_t^n,z_t^{-n})\sum_{s_t}R(s_t,a_t)\P(s_t|h_t^0,z_t^n,z_t^{-n})\\
&=\sum_{s_t}R(s_t,a_t)\sum_{z_t^{-n}}\P(z_t^{-n}|h_t^0,h_t^n)\cdot\P(s_t|h_t^0,z_t^n,z_t^{-n})\sum_{h_t^{-n}}\P(h_t^{-n}|h_t^0,h_t^n,z_t^{-n})\\
&=\sum_{s_t}R(s_t,a_t)\sum_{z_t^{-n}}\P(z_t^{-n}|h_t^0,z_t^n)\cdot\P(s_t|h_t^0,z_t^n,z_t^{-n})\cdot1\tag{(SPI4)}\\
&=\sum_{s_t}R(s_t,a_t)\sum_{z_t^{-n}}\P(s_t,z_t^{-n}|h_t^0,z_t^n)=\sum_{s_t}R(s_t,a_t)\P(s_t|h_t^0,z_t^n)=\E[R(S_t,A_t)|h_t^0,z_t^n,a_t].
\end{align*}
Note that the superscript $-n$ only contains $[N]\sm\{n\}$ and does not contain $0$.
\end{proof}
\section{Algorithmic Framework}\label{app:framework}
In this section we propose an MARL algorithmic framework using the theory developed in \pref{sec:main-result}; the designing detail is left as future work. The framework adopts the ``centralized learning distributed execution" scheme, i.e., the agents assume the omniscient supervisor's view when they learn the compressions and policies.

\begin{algorithm}
\caption{Deep-MARL Framework}\label{alg:deep}
\nl Common part: coordinator computes (done in each agent in execution phase) $\Zh_t^0=\rho^0(O_t^0,\Lmh_{t-1})$, $\Lmh_t=\vphi^0(\Zh_t^0)$.\\
\nl Private part: agent $n$ computes $\Zh_t^n=\rho^n(O_t^n,A_{t-1}^n)$, $A_t^n=\vphi^n(\Zh_t^n,\Lmh_t^n)$.\\
\nl \If{in learning phase}{
\nl Coordinator computes $(\Rh_t,\Oh_{t+1}^0)=\psi^C(\Zh_t^0,\Lmh_t)$.\\
\nl $(\Rh_t,\Oh_{t+1}^0)$ is compared with ground truth $(R_t,O_{t+1}^0)$ and loss is back-propagated to $(\rho^0,\psi^C)$.\\
\nl Supervisor computes $(\Rh_t,\Oh_{t+1}^{0:N})=\psi^S(\Zh_t^{0:N},A_t^{1:N})$.\\
\nl $(\Rh_t,\Oh_{t+1}^{0:N})$ is compared with ground truth $(R_t,O_{t+1}^{0:N})$ and loss is back-propagated to $(\rho^{0:N},\psi^S)$.\\
\nl Coordinator computes $\sum_{\tau=t-W}^tR_\tau$ and performs policy gradient on $\vphi^{0:N}$.
}
\end{algorithm}

There are three types of functions within: the state networks $\rho^{0:N}$ modeled by recurrent neural networks (RNNs), the policy networks $\vphi^{0:N}$ modeled by deep neural networks (DNNs), and the prediction networks $\psi^C$ and $\psi^S$ also modeled by DNNs. The state networks $\rho^{0:N}$ serve the purpose of the compression mappings $\phih_t^{0:N}$ in \pref{dff:ASPS} and \pref{dff:ASCS}, and their recursive evolution structures suggest an RNN modeling. The common policy network $\vphi^0$ takes $\Zh_t^0$ as input and gives the prescription $\Lmh_t$ as suggested by \pref{sec:ASCS}; the private policy network $\vphi^n$ takes $\Zh_t^n$ and $\Lmh_t^n$ and outputs $A_t^n$. Here, we have to use a \emph{variable} to represent the \emph{prescription function}; hence, it cannot be directly applied to $\Zh_t^n$. Effective design of representing prescription function is left as future work, even though \pref{lem:two-prescription} provides a nice decomposition. Finally, the policy networks $\psi^C$ and $\psi^S$ are used to produce the predicted reward and new observations. In the learning phase, the predictions are compared with the ground truth and errors are back-propagated through the state and prediction networks. This requires full knowledge of $\Zh_t^{1:N}$ and $O_t^{1:N}$; consequently, the learning has to be centralized. A windowed (with length $W$) cumulative reward is summed for the computation of the loss in policy gradient methods \citep{RLBook_2018}, which is back-propagated through the policy networks; actor-critic methods can also be employed here. In the execution phase, only state and policy networks are required, and everything can be performed in a decentralized fashion. Note that in our proof of \pref{thm:ASPS} only the fact that $\Zh_t^{1:N}$ can be updated from $\Zh_{t-1}^{1:N}$ is needed.

To design a fully decentralized learning scheme, one needs conditions similar to (ASPS2) and (ASPS3) but only involving $o_t^n$, $h_t^n$ and $\zh_t^n$ instead of the whole $o_t^{1:N}$, $h_t^{1:N}$ and $\zh_t^{1:N}$ so that individual prediction networks that does not require supervisor's view can be designed. This might demand a ``consistency condition" similarly to (SPI4), and it is likely that this is only possible through agents communicating through some signal space or directly sending their parameters \citep{Zhang_NetMARLSurvey_2019}. Further, the required expectations over the realizations of the private information conditioned on the common information could be hard to estimate. This is also left as future work.

\end{document}